\newtheorem{theorem}{Theorem}[section]
\newtheorem{corollary}{Corollary}[theorem]
\newtheorem{lemma}[theorem]{Lemma}
\DeclarePairedDelimiter\ceil{\lceil}{\rceil}
\newtcbox{\popovnotesmall}{breakable,enhanced jigsaw,nobeforeafter,tcbox raise base,boxrule=0.4pt,top=0mm,bottom=0mm,
  right=0mm,left=4mm,arc=1pt,boxsep=2pt,before upper={\vphantom{dlg}},
  colframe=Fuchsia!75!black,coltext=black,colback=Fuchsia!20,
  overlay={\begin{tcbclipinterior}\fill[Fuchsia!75!black] (frame.south west)
    rectangle node[text=white,font=\sffamily\bfseries\tiny,rotate=90] {AAP} ([xshift=4mm]frame.north west);\end{tcbclipinterior}}}
\def\*#1{\boldsymbol{\mathbf{#1}}}
\def\##1{\bm{\mathsf{#1}}}
\DeclareMathOperator{\tr}{tr}
\DeclarePairedDelimiterX{\norm}[1]{\lVert}{\rVert}{#1}
\pgfplotsset{compat=1.18} 
\begin{document}

\pgfplotsset{clean/.style={axis lines*=left,
        axis on top=true,
        axis x line shift=0.0em,
        axis y line shift=0.75em,
        every tick/.style={black, thick},
        axis line style = ultra thick,
        tick align=outside,
        clip=false,
        major tick length=4pt}}

\title{The Ensemble Epanechnikov Mixture Filter}

\author{Andrey~A.~Popov,
        Renato~Zanetti,~\IEEEmembership{Senior Member,~IEEE}
\thanks{A. A. Popov is with the Oden Institute for Computational Engineering \& Sciences, the University of Texas at Austin, Austin,
TX, 78712 USA e-mail: apopov@hawaii.edu}
\thanks{R. Zanetti is with the Dept of Aerospace Engineering \& Engineering Mechanics, the University of Texas at Austin, Austin,
TX, 78712 USA e-mail: renato@utexas.edu}
\thanks{Manuscript received \today.}}

\markboth{arXiv}%
{Popov \MakeLowercase{\textit{et al.}}: The Ensemble Epanechnikov Mixture Filter}


\maketitle

\begin{abstract}
    In the high-dimensional setting, Gaussian mixture kernel density estimates become increasingly suboptimal. 
    In this work we aim to show that it is practical to instead use the optimal  multivariate Epanechnikov kernel. 
    We make use of this optimal Epanechnikov mixture kernel density estimate for the sequential filtering scenario through what we term the ensemble Epanechnikov mixture filter (EnEMF).
    We provide a practical implementation of the EnEMF that is as cost efficient as the comparable ensemble Gaussian mixture filter. We show on a static example that the EnEMF is robust to growth in dimension, and also that the EnEMF has a significant reduction in error per particle on the 40-variable Lorenz '96 system.
\end{abstract}

\begin{IEEEkeywords}
Non-linear Estimation, High-dimensional filtering, Kernel Density Estimation, Epanechnikov Kernel
\end{IEEEkeywords}

\IEEEpeerreviewmaketitle

\section{Introduction}

State estimation~\cite{bar2004estimation} and data assimilation~\cite{asch2016data, reich2015probabilistic} methods restricted to propagating a single mean and a single covariance are limited to dealing with near-linear near-Gaussian scenarios. 
When dealing with highly non-linear dynamics and measurements, these methods do not have robust convergence guarantees and, in the worst case, actively fight against the goal of estimating the uncertainty of the dynamics of interest.
Non-linear filters~\cite{bar2004estimation} can alleviate some of these problems by incorporate measurement information in a much more general (more than affine) fashion.

The Gaussian sum filter (GSF)~\cite{sorenson1971recursive, terejanu2008uncertainty} is capable of fully representing almost all useful probability density functions through the use of Gaussian mixture models (GMM) and the Gaussian sum update. 
Fundamentally, it suffers from requiring a near-Gaussian update of the covariances and requires sophisticated splitting and merging techniques~\cite{faubel2009split} in order to avoid weight collapse in the components.
This means that the GSF has the potential to not behave well in the highly non-linear setting.

On the other hand, particle filters~\cite{gustafsson2002particle} are capable of representing any probability density through a collection of samples. A sample-based representation of a probability density is simpler than a GMM, though requires significantly more particles to represent many probability densities, particularly in the high-dimensional setting.
The dominant roadblock to the use of particle filters is that they require significant effort to avoid weight collapse or filter collapse by employing either sophisticated resampling or particle flow techniques~\cite{daum2016gromov, van2019particle}.

A filter that bridges the gap between the two methodologies is the ensemble Gaussian mixture filter (EnGMF)~\cite{anderson1999monte, liu2016efficient, popov2022adaptive,popov2023elengmf2}.
The EnGMF relies on a kernel density estimate (KDE) to build a GMM  representation of our prior uncertainty from a particle-based propagation step, incorporates measurement information into the posterior through the use of the Gaussian sum update. 
GMMs are attractive for building KDEs  state estimation because of  well-known ``nice'' properties of the Gaussian distribution~\cite{patel1996handbook}.

In the high-dimensional setting, however, GMMs for KDE become progressively less `efficient' as the dimension grows. 
Kernel density estimates based on the Epanechnikov (\textipa{[j\textschwa p\textturna n\textsuperscript{j}'e\texttoptiebar{t\textctc}n\textsuperscript{j}\textsci k\textturna f]}) kernel minimize the error with respect to the underlying distribution of the particles.
This work presents an ensemble Epanechnikov mixture filter (EnEMF) that takes advantage of the efficiency of the Epanechnikov kernel in order to perform high-dimensional particle filtering.

This work is an expanded version of the conference paper~\cite{popov2024fusion} which explored the Epanechnikov kernel in ensemble mixture model filtering. This work formalizes the correctness of the resampling procedure, expands the filter to work with a more sophisticated Gaussian sum update, adds a new weighting scheme based on the unscented transform, and provides a new numerical example that illustrates the robustness of the approach to increase in dimension.

This paper is organized as follows: \cref{sec:background} provides background on ensemble mixture model filtering,  kernel density estimation, and motivation as to the superiority of the Epanechnikov kernel.
The EnEMF is described in~\cref{sec:EnEMF}, with a practical implementation that makes use of the Gaussian sum update. Numerical experiments on the $n$-dimensional banana problem, and on the 40-variable Lorenz '96 system are provided in~\cref{sec:numerical-experiment}.
Finally \cref{sec:conclusions} provides closing remarks about why we believe the Epanechnikov kernel is suitable for ensemble mixture model filtering.  

\section{Background and Motivation}
\label{sec:background}

\begin{figure}[t]
    \centering
    \begin{tikzpicture}

    \begin{scope}
        \filldraw [fill=tolpurple!5, draw=tolpurple!45,rounded corners, very thick, dashed] (-1.2,-1) rectangle (1,0.4);
        \node[circle, draw=tolblue, fill=tolblue, inner sep=2pt, outer sep=4pt] (A) at (0,0) {};
        \node[circle, draw=tolblue, fill=tolblue, inner sep=2pt, outer sep=4pt] (B) at (-0.75,-0.5) {};
        \node[circle, draw=tolblue, fill=tolblue, inner sep=2pt, outer sep=4pt] (C) at (0.5,-0.25) {};
    \end{scope}

    \begin{scope}[shift={(3,2)}]
        \filldraw [fill=tolpurple!5, draw=tolpurple!45,rounded corners, very thick, dashed] (-1.0,-0.95) rectangle (1.2,1.1);
        \node[circle, draw=tolblue, fill=tolblue, inner sep=2pt, outer sep=4pt] (A2) at (0,0.7) {};
        \node[circle, draw=tolblue, fill=tolblue, inner sep=2pt, outer sep=4pt] (B2) at (0.75,-0.5) {};
        \node[circle, draw=tolblue, fill=tolblue, inner sep=2pt, outer sep=4pt] (C2) at (-0.5,-0.25) {};
    \end{scope}

    \draw[->, color=tolred, very thick] (A) to[out=86,in=170] (A2);
    \draw[->, color=tolred, very thick] (B) to[out=86,in=120] (B2);
    \draw[->, color=tolred, very thick] (C) to[out=86,in=180] (C2);

    \node[rotate=40] at (1.1,1.6) {Propagate};

    \begin{scope}[shift={(5,-1)}]
        \filldraw [fill=tolpurple!5, draw=tolpurple!45,rounded corners, very thick, dashed] (-1.3,-1.25) rectangle (1.5,1.5);
       \node[circle, draw=tolblue, fill=tolblue, inner sep=2pt, outer sep=4pt] (A3) at (0,0.7) {};
        \node[circle, draw=tolblue, fill=tolblue, inner sep=2pt, outer sep=4pt] (B3) at (0.75,-0.5) {};
        \node[circle, draw=tolblue, fill=tolblue, inner sep=2pt, outer sep=4pt] (C3) at (-0.5,-0.25) {};

        \draw[fill=none, color=tolblue, dashed, very thick](0.0,0.7) ellipse (15pt and 13pt);
        \draw[fill=none, color=tolblue, dashed, very thick](0.75,-0.5) ellipse (15pt and 13pt);
        \draw[fill=none, color=tolblue, dashed, very thick](-0.5,-0.25) ellipse (15pt and 13pt);
    \end{scope}

    \draw[->, color=tolred, very thick] (4.3, 2.25) to[out=0,in=90] (5.5, 0.65);

    \node[rotate=-40] at (5.35,2.1) {Build KDE};

    \begin{scope}[shift={(2,-3)}]
        \filldraw [fill=tolpurple!5, draw=tolpurple!45,rounded corners, very thick, dashed] (-1.1,-0.95) rectangle (1.3,1.15);
       \node[circle, draw=tolblue, fill=tolblue, inner sep=2pt, outer sep=4pt] (A4) at (0.2,0.45) {};
        \node[circle, draw=tolblue, fill=tolblue, inner sep=2pt, outer sep=4pt] (B4) at (0.75,-0.1) {};
        \node[circle, draw=tolblue, fill=tolblue, inner sep=2pt, outer sep=4pt] (C4) at (-0.5,-0.25) {};

        \draw[fill=none, color=tolblue, dashed, very thick, rotate around={-35:(0.2,0.45)}](0.2,0.45) ellipse (7pt and 12pt);
        \draw[fill=none, color=tolblue, dashed, very thick, rotate around={10:(0.75,-0.1)}](0.75,-0.1) ellipse (9pt and 14pt);
        \draw[fill=none, color=tolblue, dashed, very thick, rotate around={-10:(-0.5,-0.25)}](-0.5,-0.25) ellipse (9pt and 14pt);

        \begin{scope}[shift={(0.25,0.6)}]
        \node[circle, draw=none, fill=none, inner sep=0pt, outer sep=0pt] (A5) at (0,0) {};
        \node[circle, draw=none, fill=none, inner sep=0pt, outer sep=0pt] (B5) at (-0.75,-0.5) {};
        \node[circle, draw=none, fill=none, inner sep=0pt, outer sep=0pt] (C5) at (0.5,-0.25) {};
        \end{scope}
    \end{scope}

    \draw[->, color=tolred, very thick] (4.75, -2.35) to[out=270,in=0] (3.45, -3);
    \node[rotate=40] at (4.7,-3.1) {Update};

    \draw[->, color=tolred, very thick] (A5) to[out=150,in=270] (C);
    \draw[->, color=tolred, very thick] (B5) to[out=150,in=270] (B);
    \draw[->, color=tolred, very thick] (C5) to[out=100,in=270] (A);

    \node[rotate=-40] at (0.15,-1.7) {Resample};

\end{tikzpicture}
    \caption{Ensemble mixture model filtering diagram. Clock wise from the left-most rectangle: given a collection of particles from the previous time, (i) propagate to the current time,  (ii) build a mixture model representing the prior from the particles through kernel density estimation, (iii) update the mixture model by making use of the measurements to create a mixture model representing the posterior, and (iv) resample to create a new collection of particles. This process is repeated until a desired time is reached or ad infinitum.}
    \label{fig:ensemble-mixture-model-filtering}
\end{figure}
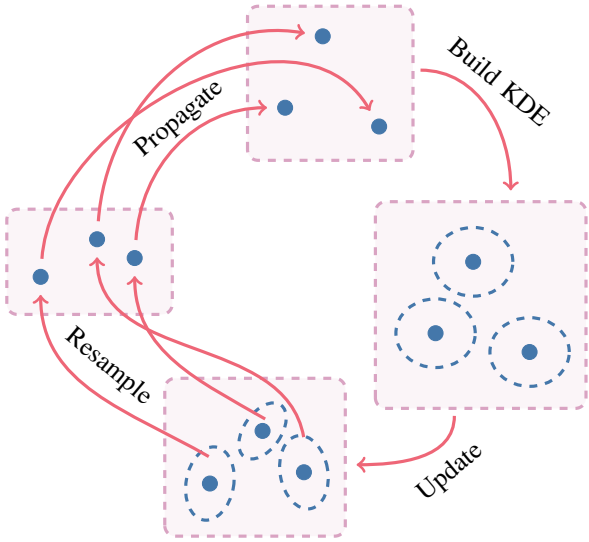

Ensemble mixture modeling filtering, visually described in~\cref{fig:ensemble-mixture-model-filtering}, combines four techniques for state estimation: particle propagation, kernel density estimation, the mixture model update, and resampling. Each one providing an essential component that allows for accurate representations of the ``true''~\cite{jaynes2003probability} posterior with a finite number of samples.

Particle propagation~\cite{reich2015probabilistic} allows for long-term propagation of our knowledge about the state of a particular system through highly non-linear dynamics with arbitrary precision.
Formally, assume that we are given a collection of $N$ independently and identically distributed samples,
\begin{equation}\label{eq:ensemble}
    \*X_{k-1} = \left[x_1, x_2, \dots x_N\right],\quad \*X_{k-1}\in\mathbb{R}^{n\times N},
\end{equation}
from the $n$-dimensional probability distribution of interest at time index $k-1$. 
With abuse of notation, we can write their propagation through some (discrete or continuous, stochastic or deterministic) dynamics,
\begin{equation}
    \*X_k = F(\*X_{k-1}),
\end{equation}
resulting in a collection of $N$ particles at time index $k$. Generally the dynamics $F$ can incorporate some measure of uncertainty through some type of process noise or measure of model error, though for simplicity and without any real loss of generality we ignore any such notions for the remainder of this work.

The resulting particles can be used to represent our knowledge about the state of a system through the use of sample statistic, and, for the purposes of this work, can be used to construct a representation of the distribution of interest at time index $k$ through the use of kernel density estimation techniques~\cite{silverman2018density}.
If the distribution of interest is denoted by $f$,  the kernel density estimate of the distribution of interest  described by the ensemble~\cref{eq:ensemble} is the mixture model,
\begin{equation}\label{eq:KDE-estimate}
    \widetilde{f}_{\*X_N}(x) = \frac{1}{N} \sum_{i=1}^N \mathcal{K}(x\,;\,x_i, \*C_i),
\end{equation}
where $\mathcal{K}$ is a probability distribution known as a kernel that is parameterized by the sample $x_i$ and by some matrix factor $\*C_i$ that represents a measure of local covariance that is defined later.
We assume the kernel $\mathcal{K}$ satisfies the following properties,
\begin{equation}\label{eq:kernel-properties}
    \begin{gathered}
        \mathcal{K}(x) \geq 0\quad \forall x \in \mathbb{R}^n\\
        \int_{\mathbb{R}^n} \mathcal{K}(x) \mathrm{d}x = 1,\\
        \int_{\mathbb{R}^n} x \mathcal{K}(x) \mathrm{d}x = 0,\\
        \int_{\mathbb{R}^n} x x^{\mathsf{T}} \mathcal{K}(x) \mathrm{d}x < \infty,\\
        \mathcal{K}(x; \mu, \*H) \coloneqq \frac{1}{\sqrt{\lvert\*H\rvert}}\mathcal{K}\left(\*H^{-1/2}( x - \mu)\right),\quad \,\*H > 0\\
        \norm{x} = \norm{y} \implies \mathcal{K}(x) = \mathcal{K}(y), \quad \forall x,y\in\mathbb{R}^n,
    \end{gathered}
\end{equation}
where the first and second properties properties ensure that the Kernel is a probability density, the third property ensures that the kernel is zero mean, the fourth property ensures that the kernel has a finite (co-)variance, the fifth property defines arbitrary shifting by mean and re-scaling by symmetric positive definite matrices, and the sixth property ensures that the kernel is radially symmetric.

Assume that we are given a nonlinear measurement of the state,
\begin{equation}\label{eq:measurement}
    y = h(x) + \eta,
\end{equation}
through some non-linear function $h$, and
with unbiased, $\mathbb{E}[\eta] = 0$, additive error that has covariance $\operatorname{Cov}[\eta] = \*R$.
For the remainder of this work we make the common, but not required assumption that the measurement likelihood is Gaussian,
\begin{equation}\label{eq:measurement-likelihood}
    p(y | x) = \mathcal{N}(y\,;\, h(x), \*R),
\end{equation}
though this assumption can be relaxed and generalized in various different ways that are outside the scope of this work.

We want to leverage this general framework to perform Bayesian inference, thus our goal is to find a representation of the posterior distribution,
\begin{equation}\label{eq:Bayes-rule}
    p(x | y) \propto p(y | x) p(x),
\end{equation}
which in the case of the prior mixture model~\cref{eq:KDE-estimate} and measurement likelihood~\cref{eq:measurement-likelihood} is exactly described by the mixture model,
\begin{equation}\label{eq:posterior-MM}
    p(x | y) = \sum_{i = 1}^{N} w_{i}\, \widetilde{\mathcal{K}}(x\,;\, x_i, \*C_i, y, \*R),
\end{equation}
where the new mixture components are given by
\begin{equation}\label{eq:posterior-mixture-mode}
    \begin{multlined}
    \widetilde{\mathcal{K}}(x\,;\, x_i, \*C_i, y, \*R) =\qquad\qquad\qquad\qquad\qquad\qquad\\ 
    \frac{\mathcal{K}(x\,;\, x_i, \*C_i)\mathcal{N}(y\,;\, h(x), \*R)}{\int_{\mathbb{R}^n} \mathcal{K}(x\,;\, x_i, \*C_i)\mathcal{N}(y\,;\, h(x), \*R)\,\mathrm{d} x},
    \end{multlined}
\end{equation}
with the weights,
\begin{equation}\label{eq:posterior-weights}
    w_{i} \propto \int_{\mathbb{R}^n} \mathcal{K}(x\,;\, x_i, \*C_i)\mathcal{N}(y\,;\, h(x), \*R) \,\mathrm{d} x,
\end{equation}
which are derived by simple convolution of distributions.

If our probability of interest $f$ has known finite covariance,
\begin{equation}\label{eq:true-covariance}
    \operatorname{Cov}(f) = \*\Sigma_f < \infty,
\end{equation}
then the standard scalar parameterization~\cite{silverman2018density} of the covariances, $\*C_i$, in~\cref{eq:KDE-estimate}, is given by,
\begin{equation}\label{eq:scaled-covariance}
    \*C_i = h^2 \*\Sigma_{f}, \quad i = 1,\dots,N,
\end{equation}
where $h$ is known as the bandwidth parameter.
In this work we make use of the scalar parameterization to restrict the parameters of our kernel density estimate to two choices: the kernel $\mathcal{K}$ and the bandwidth $h$. 

The most common type of kernel that is used is the (zero-mean, unit-covariance-scaled) Gaussian kernel,
\begin{equation}\label{eq:Gaussian-kernel}
    \mathcal{N}(x) = \frac{1}{(2\pi)^{n/2}} e^{-\frac{1}{2} x^{\mathsf{T}} x},
\end{equation}
which has many nice properties that make it simple to use and reason about. These very same properties make it extremely attractive to the practitioner, and make its choice a thought-terminating clich\'e---the mere invocation of the Gaussian kernel assumption terminates reasoning without significant pushback. We now show why the Gaussian kernel assumption needs to be questioned---especially in the high-dimensional setting.

\subsection{Minimizing KDE Error}

Let's now put our focus on the goodness-of-fit of the kernel density estimate~\cref{eq:KDE-estimate}.
The most common metric that describes how well the KDE estimate approximates the target distribution is the mean integral squared error (MISE), 
\begin{equation}\label{eq:MISE}
    \operatorname{MISE}\left(f, \widetilde{f}_{\*X_N}\right) = \mathbb{E}_{\*X_N}\left[\int_{\mathbb{R}^n} \left(f(x) - \widetilde{f}_{\*X_N}(x)\right)^2\mathbf{d}x\right]
\end{equation}
which describes the squared error over all the support of the target distribution $f$ averaged over all possible realizations of $N$ samples $\*X_N$. Dealing with the MISE directly is intractable for most kernels and distributions of interest, thus the asymptotic MISE (AMISE),
\begin{equation}\label{eq:AMISE}
    \operatorname{AMISE}\left(f, \widetilde{f}_{\*X_N}\right) = \frac{1}{4} h^4 \alpha^2 \gamma + N^{-1} h^{-n} \beta, 
\end{equation}
is frequently used instead.
The new parameters of~\cref{eq:AMISE} are given by,
\begin{equation}\label{eq:AMISE-parameters}
\begin{aligned}
    \alpha &= \frac{1}{n}\operatorname{tr}\left(\int_{\mathbb{R}^n} x^{\mathsf{T}} x \,\mathcal{K}(x)\mathrm{d} x\right)\\
    \beta &= \int_{\mathbb{R}^n} \mathcal{K}(x)^2 \mathrm{d} x,\\
    \gamma &= \int_{\mathbb{R}^n} \tr^2\left[\nabla_x^2 \widehat{f}(x)\right] \mathrm{d} x,
\end{aligned}
\end{equation}
where $\widehat{f}$ is the scaling of $f$ by $\*\Sigma_f$ that is ameanable to representation by the unscaled kernel $K$. The derivation of the above can be found in~\cite{silverman2018density}.

The following lemma (a wonderfully simple proof of which is given in~\cite{baker1999integration}) is useful for proving results about the parameters in~\cref{eq:AMISE-parameters}.
\begin{lemma}[Radially symmetric integral]\label{lem:radially-symmetric integral}
For a radially symmetric function, $f:\mathbb{R}\to\mathbb{R}$, the integral of $f(x^{\mathsf{T}}x)$ over the ball $B_n(R)$ in $n$ dimensions with radius $R$, has a alternative representation in terms of the scalar variable, $r$,
\begin{equation}
\int_{B_n(R)}\mkern-36mu f(x^{\mathsf{T}} x) \mathrm{d} x = \int_0^{R}\mkern-12mu n c_n f(r^2) r^{n-1} \mathrm{d}r,
\end{equation}
where,
\begin{equation}
c_n = \frac{\pi^{\frac{n}{2}}}{\Gamma\left(\frac{n}{2} + 1\right)},
\end{equation}
is the volume of a unit sphere in $n$ dimensions.
\end{lemma}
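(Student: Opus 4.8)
\emph{Proof sketch (plan).} The plan is to collapse the $n$-dimensional integral onto concentric spherical shells, using the fact that the integrand $f(x^{\mathsf{T}}x)=f(r^2)$ is constant on each sphere $\{\norm{x}=r\}$, so that the only remaining dependence is on the scalar radius.

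First I would record the two elementary geometric inputs. The ball $B_n(R)$ has Lebesgue volume $\lvert B_n(R)\rvert = c_n R^n$, with $c_n$ as in the statement, and the $(n-1)$-dimensional surface measure of $\partial B_n(r)$ is the $r$-derivative of that volume, namely $n c_n r^{n-1}$. Equivalently, passing to hyperspherical coordinates $x = r\omega$ with $r\ge 0$ and $\omega\in S^{n-1}$, the Lebesgue measure factors as $\mathrm{d}x = r^{n-1}\,\mathrm{d}r\,\mathrm{d}\sigma(\omega)$ and $\int_{S^{n-1}}\mathrm{d}\sigma = n c_n$.

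The main step then admits two equivalent routes. Route (i): apply Fubini's theorem in hyperspherical coordinates directly; since $f(x^{\mathsf{T}}x)$ depends only on $r$, the angular integral contributes the constant $n c_n$ and one is left with $\int_0^R n c_n f(r^2) r^{n-1}\,\mathrm{d}r$. Route (ii), which is the short argument of \cite{baker1999integration}: set $I(R) = \int_{B_n(R)} f(x^{\mathsf{T}}x)\,\mathrm{d}x$ and differentiate in the upper radius; the boundary contribution is the value $f(R^2)$ of the integrand on $\partial B_n(R)$ times the surface area $n c_n R^{n-1}$, so $I'(R) = n c_n f(R^2) R^{n-1}$, and since $I(0)=0$ the fundamental theorem of calculus yields the claimed identity.

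The only real obstacle is the measure-theoretic bookkeeping needed to make either route rigorous: for route (i) one invokes the polar change of variables (or the co-area formula) and checks that $r\mapsto n c_n f(r^2) r^{n-1}$ lies in $L^1([0,R])$ so Fubini applies; for route (ii) one must justify differentiating under the integral sign, which again reduces to local integrability of that same one-dimensional integrand together with continuity of $f$. Since the kernels of interest in \cref{eq:kernel-properties} are bounded with finite second moment, these integrability conditions hold in every case we use, so it suffices to state the lemma for $f$ with $f(r^2)r^{n-1}\in L^1([0,R])$.
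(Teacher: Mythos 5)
Your proposal is correct; the paper itself gives no proof of this lemma and instead defers to the "wonderfully simple proof" in the cited reference, which is precisely your route (ii) of differentiating $I(R)=\int_{B_n(R)}f(x^{\mathsf{T}}x)\,\mathrm{d}x$ with respect to the radius and applying the fundamental theorem of calculus. Your route (i) via polar coordinates is the standard textbook alternative, and your integrability caveats are appropriate but not needed for the bounded, finite-second-moment kernels used in the paper.
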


By lemma~\ref{lem:radially-symmetric integral}, if the target distribution $\widehat{f}$ in~\cref{eq:AMISE-parameters} is the unit-covariance Gaussian, then the last parameter simplifies to,
\begin{equation}\label{eq:Gaussian-gamma}
    \gamma = \frac{1}{2^n \sqrt{\pi}^n}\left(\frac{1}{2} n + \frac{1}{4}n^2\right),
\end{equation}
which---while not required to be defined for any of the subsequent derivations---is the value that is used for the rest of this work, as exploring alternatives is outside the current scope.

Given an arbitrary kernel $\mathcal{K}$ we want create a kernel density estimate that minimizes the AMISE~\cref{eq:AMISE}. In effect, this means that we want choose the bandwidth parameter in~\cref{eq:scaled-covariance} that is optimal in terms of the error.
The following result about the optimal bandwidth is from~\cite{silverman2018density}:
\begin{theorem}
    The optimal bandwidth in~\cref{eq:scaled-covariance} that minimizes the AMISE in~\cref{eq:AMISE-parameters}, is given by,
    \begin{equation}\label{eq:optimal-bandwidth}
        h = \left[\frac{\beta n}{\alpha^2 \gamma N}\right]^{\frac{1}{n + 4}},
    \end{equation}
    where $n$ is the dimension of the system, $N$ is the number of samples and the rest of the parameters are defined by~\cref{eq:AMISE-parameters}.
\end{theorem}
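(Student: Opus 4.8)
The plan is to treat the AMISE in~\cref{eq:AMISE} as a function of the single scalar variable $h>0$, holding the kernel-dependent constants $\alpha,\beta,\gamma$ and the sample size $N$ and dimension $n$ fixed, and to find its unique stationary point by elementary calculus. Writing $g(h) = \tfrac14 h^4 \alpha^2 \gamma + N^{-1} h^{-n}\beta$, I would first note that both $\alpha^2\gamma > 0$ and $N^{-1}\beta > 0$: positivity of $\beta$ is immediate from its definition in~\cref{eq:AMISE-parameters} as the integral of a nonnegative square that is not identically zero, $\alpha > 0$ follows similarly since $\mathcal{K}$ is a nonnegative density with nonzero second moment (property four of~\cref{eq:kernel-properties}), and $\gamma > 0$ provided $\widehat f$ is not affine (which holds in the Gaussian case of~\cref{eq:Gaussian-gamma}). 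Hence $g$ is a strictly positive combination of the convex functions $h\mapsto h^4$ and $h\mapsto h^{-n}$ on $(0,\infty)$, so $g$ is strictly convex there, with $g(h)\to\infty$ as $h\to 0^+$ and as $h\to\infty$; therefore $g$ attains a unique global minimum at the unique critical point.

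Next I would compute the derivative,
\begin{equation}
    g'(h) = h^3 \alpha^2 \gamma - n N^{-1} h^{-n-1}\beta ,
\end{equation}
and set $g'(h) = 0$. Multiplying through by $h^{n+1} > 0$ gives $h^{n+4}\alpha^2\gamma = n N^{-1}\beta$, hence
\begin{equation}
    h^{n+4} = \frac{\beta n}{\alpha^2 \gamma N},
\end{equation}
and taking the positive $(n+4)$-th root yields exactly~\cref{eq:optimal-bandwidth}. For completeness I would record that $g''(h) = 3 h^2 \alpha^2 \gamma + n(n+1)N^{-1} h^{-n-2}\beta > 0$ for all $h > 0$, which re-confirms that the stationary point is a strict local (and, by the convexity argument above, global) minimizer.

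There is no serious obstacle here: the statement is a one-variable optimization and the only thing that requires a moment of care is justifying that the critical point is a genuine global minimum rather than a saddle or boundary phenomenon, which the strict-convexity-plus-coercivity observation handles cleanly, together with checking that the constants entering the formula are strictly positive so that the root is well defined and real. I would present the convexity remark briefly and then the two-line derivative calculation as the substance of the proof.
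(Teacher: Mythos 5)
Your proof is correct: the paper itself gives no proof of this theorem (it is quoted from the cited Silverman reference), and your single-variable calculus derivation --- differentiate the AMISE in $h$, set the derivative to zero, and solve $h^{n+4}\alpha^2\gamma = n\beta/N$ --- is exactly the standard argument behind that result. The added care about positivity of $\alpha,\beta,\gamma$ and the strict convexity plus coercivity of $g$ on $(0,\infty)$ correctly rules out saddle or boundary issues, so nothing is missing.
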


We now go the other direction. Fixing the optimal bandwidth to be~\cref{eq:optimal-bandwidth}, we want to find the kernel that minimizes the AMISE.
Substituting the optimal bandwidth~\cref{eq:optimal-bandwidth} into the AMISE error metric~\cref{eq:AMISE} we get, 
\begin{equation}\label{eq:AMISE-with-optimal-bandwidth}
    \operatorname{AMISE}\left(f, \widetilde{f}_{\*X_N}\right) = \underbrace{\frac{(n + 4)}{4}\gamma^{\frac{n}{n + 4}}}_{\text{reference dist.}}\underbrace{\beta\left(\frac{n\beta}{\alpha^2}\right)^{-\frac{n}{n+4}}}_{C(\mathcal{K})}
    \underbrace{N^{-\frac{4}{n+4}}}_{\text{conv. rate}},
\end{equation}
where the first term is purely a function of the target distribution (if known), or is a function of the mismatch between the true distribution and a reference used to compute the term $\gamma$. The third term in~\cref{eq:AMISE-with-optimal-bandwidth} is the rate of converengece in the number of samples $N$, which becomes slower and severely sub-linear as the dimension $n$ increases. The term that we are interested in is the second term, $C(\mathcal{K})$, which is purely dependent on the choice of kernel $\mathcal{K}$. 
As the rate of convergence is sub-linear, the scaling term $C(\mathcal{K})$ plays a significant role in the convergence of the kernel density estimation method.
This means that the choice of kernel is the sole choice that fully determines the error of the KDE.

\begin{figure}[t]
    \centering
    \subfigure[Gaussian distribution]{
    \includegraphics[width=0.99\linewidth]{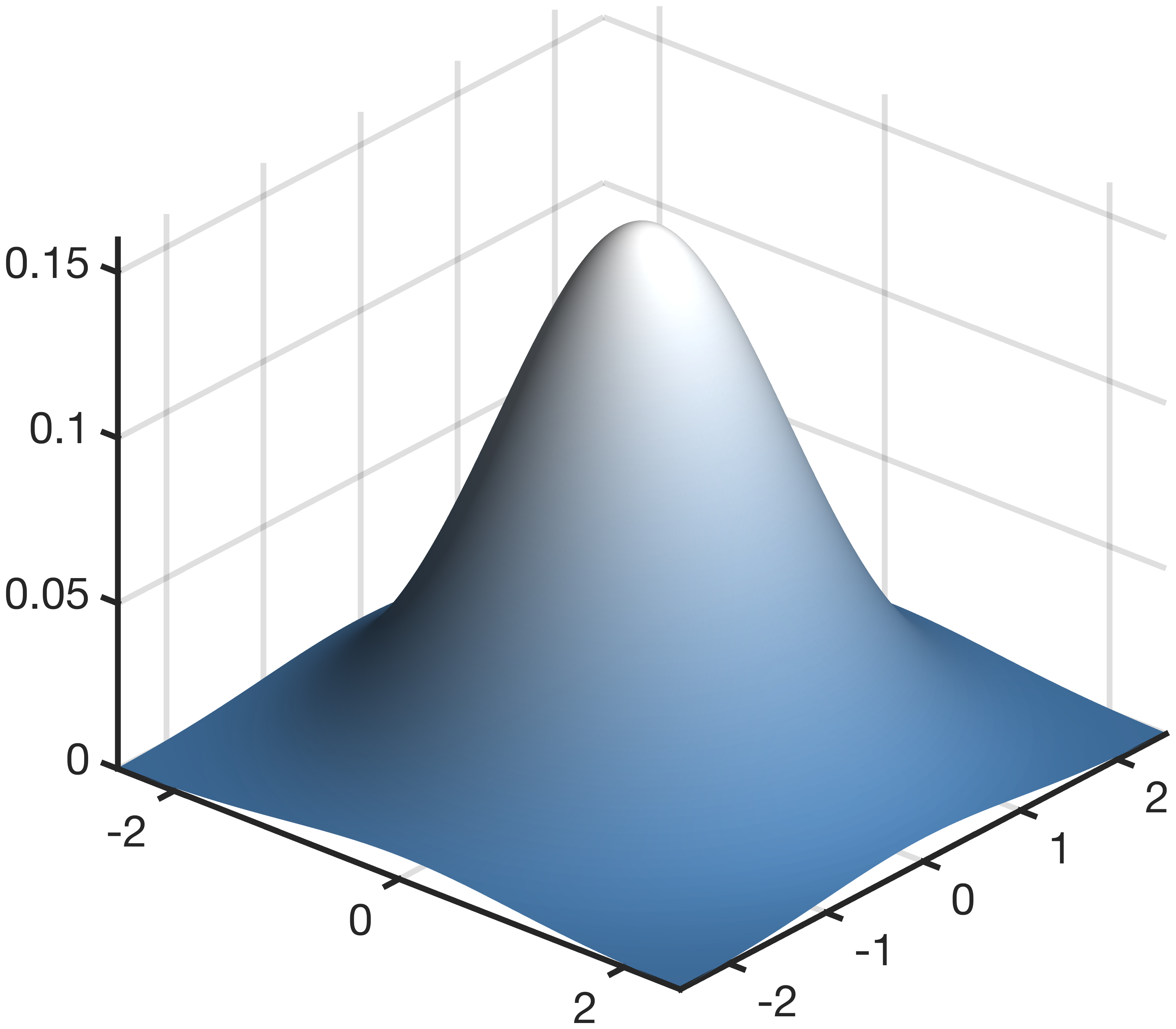}}\\
    \subfigure[Epanechnikov distribution]{\includegraphics[width=0.99\linewidth]{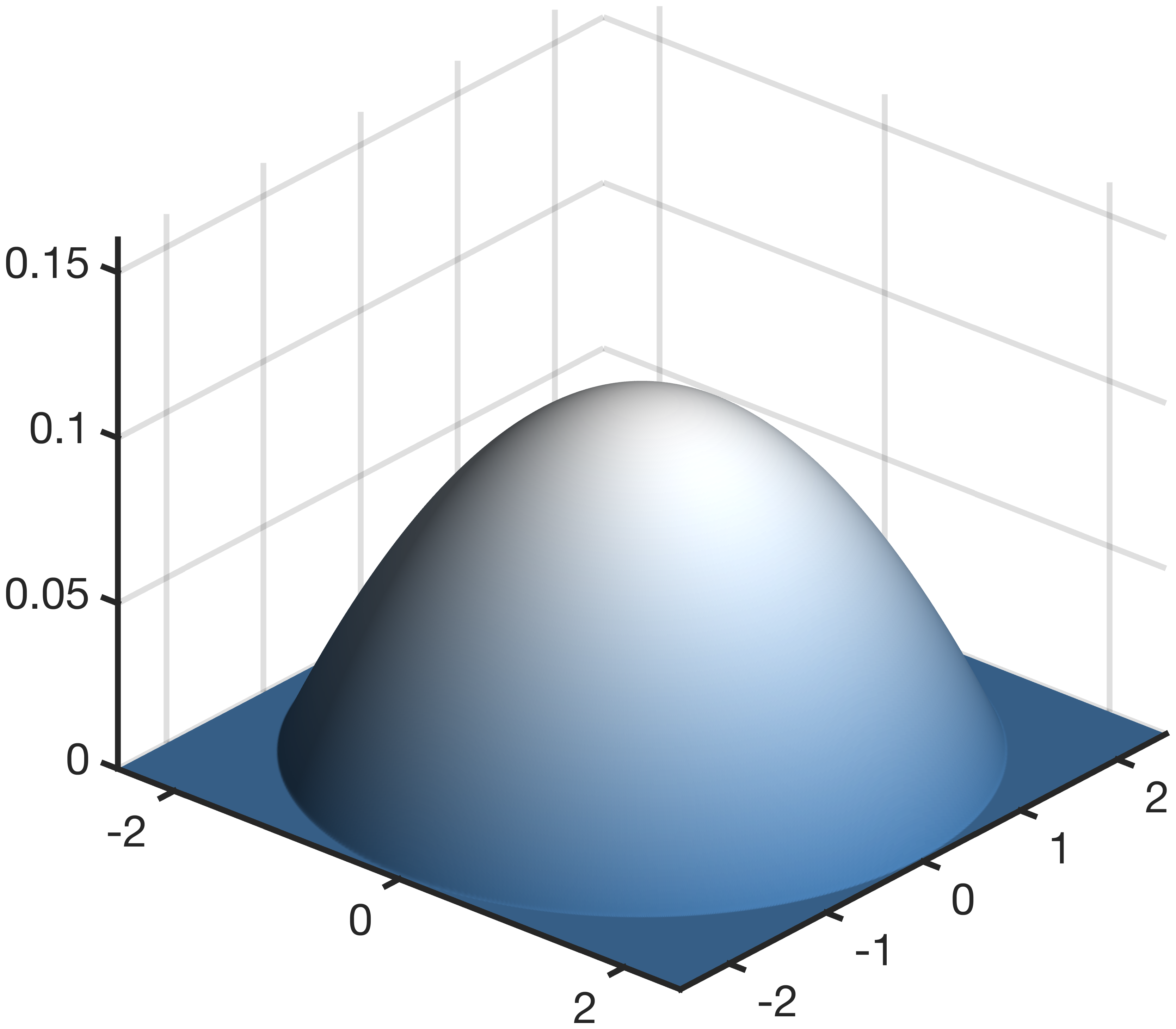}}
    \caption{(a) Gaussian and (b) Epanechnikov distribution for $n=2$ spatial dimensions, mean of zero and identity covariance.}
    \label{fig:Epanechnikov-distribution}
\end{figure}

The kernel that minimizes $C(\mathcal{K})$, is given by~\cite{deheuvels1977estimation,epanechnikov1969non} and is the unit-covariance-scaled Epanechnikov distribution,
\begin{equation}\label{eq:Epanechnikov-distribution}
    \mathcal{E}(x) = \frac{n + 2}{2 c_n (n+4)^{\frac{n+2}{2}}}(n + 4 - x^{\mathsf{T}} x), \quad x^{\mathsf{T}} x < n+4,
\end{equation}
which has support on the ball $B_n(\sqrt{n+4})$.
A visualization of the Epanechnikov distribution~\cref{eq:Epanechnikov-distribution} compared to the Gaussian distribution~\cref{eq:Gaussian-kernel} is presented in~\cref{fig:Epanechnikov-distribution}.

As~\cref{eq:Epanechnikov-distribution} is the optimal kernel that minimizes the AMISE~\cref{eq:AMISE-with-optimal-bandwidth} given the optimal bandwidth, it is reasonable to ask, just how much accuracy are we sacrificing when we choose a suboptimal kernel?
We can quantify the how effective a given kernel is relative to the Epanechnikov kernel by calculating the 
the efficiency~\cite{silverman2018density} of an arbitrary kernel $\mathcal{K}$,
\begin{equation}\label{eq:efficiency}
    \operatorname{eff}(\mathcal{K}) = \left(\frac{C(\mathcal{E})}{C(\mathcal{K})}\right)^{\frac{n+4}{4}},
\end{equation}
which describes a scaling of the effective ensemble size of the kernel $\mathcal{K}$ with respect to the optimal Epanechnikov distribution $\mathcal{E}$. The power $(n+4)/4$ represents the inverse of the rate of convergence in~\cref{eq:AMISE-with-optimal-bandwidth}.
In other words, this means  that the error using $N$ samples and the kernel $\mathcal{K}$ is equivalent to using $N\operatorname{eff}(\mathcal{K})$ samples and the kernel $\mathcal{E}$.
Conversely this also means that the error of using the Epanechnikov kernel $\mathcal{E}$ with $N$ samples is equivalent to using the kernel $\mathcal{K}$ with $N/\operatorname{eff}(\mathcal{K})$ samples.
Note that the efficiency~\cref{eq:efficiency} is highly dependent on the dimension $n$, thus in the worst-case the efficiency could effectively be zero for a large enough $n$.

Before we give a closed form expression for the efficiency of the Gaussian kernel, we have to derive a few constants for the Gaussian~\cref{eq:Gaussian-kernel} and Epanechnikov~\cref{eq:Epanechnikov-distribution} kernels.

\begin{lemma}\label{lem:Gaussian-kernel-constants}
    By lemma~\ref{lem:radially-symmetric integral}, for the Gaussian kernel,
    \begin{equation}
        \alpha_{\mathcal{N}} = 1, \quad \beta_{\mathcal{N}} = \frac{1}{(2\sqrt{\pi})^n},
    \end{equation}
    are the constants defined by~\cref{eq:AMISE-parameters}.
\end{lemma}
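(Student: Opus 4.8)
The plan is to evaluate the two integrals defining $\alpha$ and $\beta$ in \cref{eq:AMISE-parameters} directly for the Gaussian kernel $\mathcal{N}$ of \cref{eq:Gaussian-kernel}. In both cases the integrand is radially symmetric, so \cref{lem:radially-symmetric integral} (taken with $R\to\infty$) collapses the $n$-dimensional integral to a one-dimensional one; a change of variables then turns that into a Gamma function, and a final appeal to the identity $c_n = \pi^{n/2}/\Gamma(n/2+1)$ cancels all the stray powers of $2$ and $\pi$ down to the claimed closed forms.

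For $\beta_{\mathcal{N}}$, I would write $\mathcal{N}(x)^2 = (2\pi)^{-n} e^{-x^{\mathsf{T}}x}$, which is of the form $g(x^{\mathsf{T}}x)$, so \cref{lem:radially-symmetric integral} gives $\beta_{\mathcal{N}} = \frac{n c_n}{(2\pi)^n}\int_0^\infty e^{-r^2} r^{n-1}\,\mathrm{d}r$. The substitution $u = r^2$ rewrites the one-dimensional integral as $\tfrac{1}{2}\Gamma(n/2)$; using $n\Gamma(n/2) = 2\Gamma(n/2+1)$ and then $c_n\Gamma(n/2+1) = \pi^{n/2}$ reduces $\beta_{\mathcal{N}}$ to $\pi^{n/2}/(2\pi)^n = (2\sqrt{\pi})^{-n}$, which is the asserted value.

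For $\alpha_{\mathcal{N}}$, the integrand $x^{\mathsf{T}}x\,\mathcal{N}(x)$ is again of the form $g(x^{\mathsf{T}}x)$, so \cref{lem:radially-symmetric integral} gives $\int_{\mathbb{R}^n} x^{\mathsf{T}}x\,\mathcal{N}(x)\,\mathrm{d}x = \frac{n c_n}{(2\pi)^{n/2}}\int_0^\infty e^{-r^2/2} r^{n+1}\,\mathrm{d}r$. The substitution $u = r^2/2$ converts the one-dimensional integral to $2^{n/2}\Gamma(n/2+1)$, and the same cancellation with $c_n$ leaves exactly $n$. Since $x^{\mathsf{T}}x$ is scalar the $\operatorname{tr}$ in the definition of $\alpha$ is vacuous here (equivalently, for the standard normal $\int xx^{\mathsf{T}}\mathcal{N}(x)\,\mathrm{d}x = I_n$, whose normalized trace is $1$), so dividing by $n$ yields $\alpha_{\mathcal{N}} = 1$.

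I do not expect a genuine obstacle: once \cref{lem:radially-symmetric integral} is available, everything is a routine one-dimensional computation. The only points needing mild care are the Gamma-function bookkeeping — keeping track of $n\Gamma(n/2) = 2\Gamma(n/2+1)$ and the definition of $c_n$ so the powers of $2$ and $\pi$ telescope cleanly — and observing that the trace in the definition of $\alpha$ can simply be dropped in the radially symmetric case.
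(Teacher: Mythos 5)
Your computation is correct and follows exactly the route the paper intends: the paper offers no worked proof beyond the words ``By lemma~\ref{lem:radially-symmetric integral}'', and your reduction to one-dimensional Gamma integrals with the cancellation via $c_n\Gamma(n/2+1)=\pi^{n/2}$ is precisely the calculation being elided. Both values check out, and your observation that the trace in the definition of $\alpha$ is vacuous (or equivalently that $\int xx^{\mathsf{T}}\mathcal{N}(x)\,\mathrm{d}x=\*I_n$) is the right way to read \cref{eq:AMISE-parameters}.
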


\begin{lemma}\label{lem:Epanechnikov-kernel-constants}
    Again, by lemma~\ref{lem:radially-symmetric integral}, for the Epanechnikov kernel,
    \begin{equation}
        \alpha_{\mathcal{E}} = 1,\quad \beta_{\mathcal{E}} = \frac{2}{c_n} (n+2) (n+4)^{-\frac{n}{2}-1},
    \end{equation}
    are the constants defined by~\cref{eq:AMISE-parameters}.
\end{lemma}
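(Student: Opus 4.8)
\emph{Proof sketch.} The plan is to compute both constants in~\cref{eq:AMISE-parameters} for $\mathcal{K}=\mathcal{E}$ through a single application of Lemma~\ref{lem:radially-symmetric integral}. This is available because the Epanechnikov density~\cref{eq:Epanechnikov-distribution}, and hence its square, is a polynomial in $x^{\mathsf{T}}x$ supported on the ball $B_n(\sqrt{n+4})$, so both $\int_{\mathbb{R}^n}x^{\mathsf{T}}x\,\mathcal{E}(x)\,\mathrm{d}x$ and $\int_{\mathbb{R}^n}\mathcal{E}(x)^2\,\mathrm{d}x$ are exactly of the type handled by that lemma. Throughout, abbreviate $\mathcal{E}(x)=A\,(n+4-x^{\mathsf{T}}x)$ with $A=\tfrac{n+2}{2c_n(n+4)^{(n+2)/2}}$ and $R=\sqrt{n+4}$, so that $R^2=n+4$.

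For $\alpha_{\mathcal{E}}$: the integrand $x^{\mathsf{T}}x\,\mathcal{E}(x)$ is a scalar, so $\alpha_{\mathcal{E}}=\tfrac1n\int_{\mathbb{R}^n}x^{\mathsf{T}}x\,\mathcal{E}(x)\,\mathrm{d}x$. I would apply Lemma~\ref{lem:radially-symmetric integral} with the radial profile $s\mapsto s\,A(n+4-s)$, which gives $\alpha_{\mathcal{E}}=c_nA\int_0^R r^{n+1}(n+4-r^2)\,\mathrm{d}r$, integrate the two monomials $r^{n+1}$ and $r^{n+3}$, substitute $R^2=n+4$, and factor out $(n+4)^{(n+4)/2}$. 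The residual bracket is $\tfrac1{n+2}-\tfrac1{n+4}=\tfrac{2}{(n+2)(n+4)}$, and after inserting the value of $A$ every remaining power of $n+4$ and every other factor cancels, leaving $\alpha_{\mathcal{E}}=1$. (This is consistent with~\cref{eq:Epanechnikov-distribution} being normalized to unit covariance, from which $\alpha_{\mathcal{E}}=1$ could alternatively be read off directly.)

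For $\beta_{\mathcal{E}}$: Lemma~\ref{lem:radially-symmetric integral} applied with the radial profile $s\mapsto A^2(n+4-s)^2$ gives $\beta_{\mathcal{E}}=nc_nA^2\int_0^R(n+4-r^2)^2r^{n-1}\,\mathrm{d}r$. I would expand $(n+4-r^2)^2=(n+4)^2-2(n+4)r^2+r^4$, integrate the three monomials $r^{n-1},r^{n+1},r^{n+3}$, substitute $R^2=n+4$, and factor out $(n+4)^{n/2+2}$. The residual bracket is $\tfrac1n-\tfrac2{n+2}+\tfrac1{n+4}$, whose numerator over the common denominator $n(n+2)(n+4)$ collapses to the constant $8$. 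Substituting $A^2=\tfrac{(n+2)^2}{4c_n^2(n+4)^{n+2}}$ and collecting powers of $n+4$ then yields $\beta_{\mathcal{E}}=\tfrac{2}{c_n}(n+2)(n+4)^{-n/2-1}$, which is the asserted formula since $-\tfrac{n+2}{2}=-\tfrac{n}{2}-1$.

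There is no genuine obstacle here: the whole argument is one invocation of Lemma~\ref{lem:radially-symmetric integral} followed by elementary polynomial integration. The only thing requiring care is the bookkeeping of the exponents of $n+4$ together with the rational-function arithmetic; the two cancellations --- the $\alpha$-expression collapsing to exactly $1$ and the $\beta$-bracket collapsing to the constant $8$ --- are the internal sanity checks that the stated constants are correct.
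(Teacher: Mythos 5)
Your proposal is correct and follows exactly the route the paper intends: the paper gives no explicit computation for this lemma, merely invoking Lemma~\ref{lem:radially-symmetric integral}, and your application of that lemma with the two radial profiles, the substitution $R^2=n+4$, and the two cancellations ($\tfrac1{n+2}-\tfrac1{n+4}$ collapsing the $\alpha$-expression to $1$, and $\tfrac1n-\tfrac2{n+2}+\tfrac1{n+4}=\tfrac{8}{n(n+2)(n+4)}$ for $\beta$) all check out against the stated constants.
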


We now have the tools necessary to derive the efficiency of the Gaussian kernel~\cref{eq:Gaussian-kernel} for $n$-dimensional KDE.

\begin{theorem}\label{thm:Gaussian-kernel-efficiency}
    The efficiency~\cref{eq:efficiency} of the Gaussian kernel is,
    \begin{equation}
        \operatorname{eff}(\mathcal{N}) = \frac{2^{n+2}}{(n+4)^{\frac{n}{2}+1}}\Gamma \left(\frac{n}{2}+2\right),
    \end{equation}
\end{theorem}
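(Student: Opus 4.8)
The plan is to reduce the efficiency to a simple ratio of the two $\beta$ constants and then clean up with a Gamma-function recurrence. First I would specialize the definition $C(\mathcal{K}) = \beta\left(\tfrac{n\beta}{\alpha^2}\right)^{-\frac{n}{n+4}}$ from~\cref{eq:AMISE-with-optimal-bandwidth} to the Gaussian and Epanechnikov kernels. By Lemmas~\ref{lem:Gaussian-kernel-constants} and~\ref{lem:Epanechnikov-kernel-constants}, $\alpha_{\mathcal N} = \alpha_{\mathcal E} = 1$, so for either kernel $C(\mathcal{K}) = \beta^{1-\frac{n}{n+4}} n^{-\frac{n}{n+4}} = \beta^{\frac{4}{n+4}}\,n^{-\frac{n}{n+4}}$. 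Hence the factor $n^{-n/(n+4)}$ is common to $C(\mathcal E)$ and $C(\mathcal N)$ and cancels in the quotient, giving $C(\mathcal{E})/C(\mathcal{N}) = (\beta_{\mathcal E}/\beta_{\mathcal N})^{4/(n+4)}$.

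Next I would substitute this into the efficiency~\cref{eq:efficiency}. The exponent $(n+4)/4$ in the definition of $\operatorname{eff}$ is exactly the reciprocal of the exponent $4/(n+4)$ just obtained, so the two powers annihilate and we are left with the clean identity $\operatorname{eff}(\mathcal{N}) = \beta_{\mathcal E}/\beta_{\mathcal N}$. This is the conceptual heart of the argument; everything after it is bookkeeping.

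It then remains to evaluate $\beta_{\mathcal E}/\beta_{\mathcal N}$ explicitly. Using $\beta_{\mathcal N} = (2\sqrt\pi)^{-n}$ and $\beta_{\mathcal E} = \tfrac{2}{c_n}(n+2)(n+4)^{-n/2-1}$ with $c_n = \pi^{n/2}/\Gamma(n/2+1)$, the $\pi$-powers combine as $(2\sqrt\pi)^n/\pi^{n/2} = 2^n$, yielding
\begin{equation}
    \frac{\beta_{\mathcal E}}{\beta_{\mathcal N}} = 2^{n+1}\,\Gamma\!\left(\tfrac{n}{2}+1\right)(n+2)(n+4)^{-\frac{n}{2}-1}.
\end{equation}
Finally I would apply the recurrence $\Gamma(n/2+2) = \tfrac{n+2}{2}\Gamma(n/2+1)$, i.e. $(n+2)\Gamma(n/2+1) = 2\,\Gamma(n/2+2)$, which absorbs the linear factor into the Gamma function and produces $\operatorname{eff}(\mathcal{N}) = 2^{n+2}(n+4)^{-n/2-1}\Gamma(n/2+2)$, as claimed.

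The argument has no real obstacle: the only place to be careful is confirming that the exponents $(n+4)/4$ and $4/(n+4)$ truly cancel (which hinges on $\alpha_{\mathcal N} = \alpha_{\mathcal E}$, so that the $\gamma$- and reference-distribution factors in~\cref{eq:AMISE-with-optimal-bandwidth} never enter), and tracking the $\pi$ and factor-of-$2$ powers correctly in the final simplification.
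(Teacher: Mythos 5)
Your proposal is correct and is exactly the computation the paper's one-line proof gestures at: a direct substitution of the constants from Lemmas~\ref{lem:Gaussian-kernel-constants} and~\ref{lem:Epanechnikov-kernel-constants} into~\cref{eq:efficiency}, with the observation that $\alpha_{\mathcal N}=\alpha_{\mathcal E}=1$ reducing the efficiency to $\beta_{\mathcal E}/\beta_{\mathcal N}$ and a Gamma-function recurrence finishing the simplification. The paper omits these details entirely, so your write-up is simply a fuller version of the same argument.
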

\begin{proof}
    This is a direct application of the constants in lemma~\ref{lem:Gaussian-kernel-constants} and lemma~\ref{lem:Epanechnikov-kernel-constants} to~\cref{eq:efficiency}.
\end{proof}

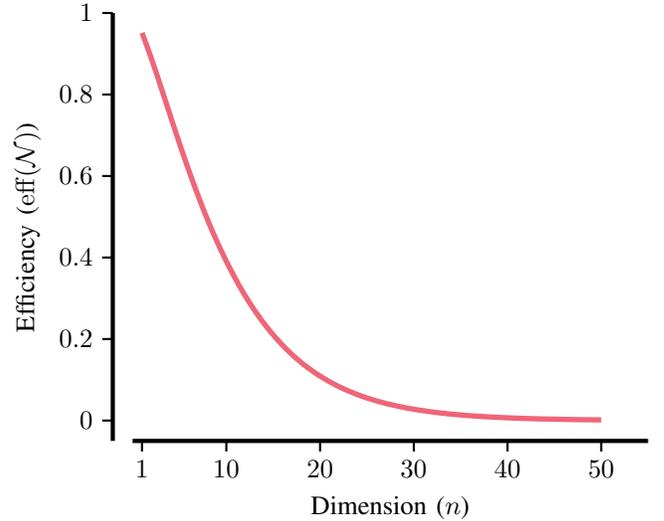
\begin{figure}
\centering
\begin{tikzpicture}
\begin{axis}[clean,
        xtick={1, 10, 20, 30, 40, 50},
        table/col sep=comma,
        xmin = 0,
        xmax = 55,
        ymin = -0.05,
        ymax = 1,
        clip mode=individual,
        xlabel = {Dimension ($n$)},
        ylabel = {Efficiency ($\operatorname{eff}(\mathcal{N})$)},
        every axis plot/.append style={line width=2pt, mark size=3.5pt}]
\addplot[color=tolred,domain=1:50, samples=200]{(2^(x+2))*((x+4)^(-x/2-1))*sqrt(pi*x + 2*pi)*(((x+2)/(2*exp(1)))^(x/2 + 1))*(((x/2+1)*sinh(2/(x+2)))^((x + 2)/4))};
\end{axis}
\end{tikzpicture}
\caption{Efficiency of the Gaussian kernel relative to the dimension~$n$. Values of the efficiency for non-integer dimensions are plotted for completeness.}
\label{fig:Gaussian-kernel-efficiency}
\end{figure}

The efficiency of the Gaussian kernel is plotted in~\cref{fig:Gaussian-kernel-efficiency}. Notice that the rate of decay of the efficiency is exponential, and that for $n=40$ the efficiency is well below $1\%$.
This means that the Gaussian kernel is highly inefficient in higher dimensions, and that attempting to use the Epanechnikov kernel might be cost-effective. Moreover, note the following:
\begin{corollary}\label{cor:worst-case}
By simple application of elementary calculus rules, the efficiency of the Gaussian kernel in~\cref{thm:Gaussian-kernel-efficiency} tends towards zero as the dimension of the system $n$ tends towards infinity, 
\begin{equation}
    \lim_{n\to\infty} \operatorname{eff}(\mathcal{N}) = 0.
\end{equation}
\end{corollary}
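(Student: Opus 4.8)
The plan is to reduce the corollary to a single application of Stirling's formula. By Theorem~\ref{thm:Gaussian-kernel-efficiency} we already have the exact closed form
\[
\operatorname{eff}(\mathcal{N}) = \frac{2^{n+2}}{(n+4)^{\frac{n}{2}+1}}\,\Gamma\!\left(\tfrac{n}{2}+2\right),
\]
so the whole question is the growth of a Gamma function against an exponential-in-$n$ denominator. Rather than manipulating $\operatorname{eff}(\mathcal{N})$ itself, I would work with $\log\operatorname{eff}(\mathcal{N})$ and show it diverges to $-\infty$.

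First I would set $m = \tfrac{n+2}{2}$ so that $\Gamma(\tfrac{n}{2}+2) = \Gamma(m+1)$, and apply Stirling in the form $\Gamma(m+1) \sim \sqrt{2\pi m}\,(m/e)^m$, equivalently $\log\Gamma(m+1) = m\log m - m + \tfrac12\log(2\pi m) + O(1/m)$. Substituting this, and rewriting $2^{n+2} = 4^{(n+2)/2}$ and $(n+4)^{\frac{n}{2}+1} = (n+4)^{(n+2)/2}$ so that everything shares the exponent $m=(n+2)/2$, the closed form collapses to
\[
\operatorname{eff}(\mathcal{N}) = \bigl(1+o(1)\bigr)\,\sqrt{\pi(n+2)}\left(\frac{2(n+2)}{e\,(n+4)}\right)^{\!\frac{n+2}{2}}.
\]
Taking logarithms gives $\log\operatorname{eff}(\mathcal{N}) = \tfrac{n+2}{2}\log\!\bigl(\tfrac{2(n+2)}{e(n+4)}\bigr) + \tfrac12\log\!\bigl(\pi(n+2)\bigr) + o(1)$.

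The final step is purely elementary: $\tfrac{2(n+2)}{e(n+4)} \to \tfrac{2}{e} < 1$, so there is an $n_0$ and a constant $\rho < 1$ (e.g.\ $\rho = 3/4$) with $\tfrac{2(n+2)}{e(n+4)} \le \rho$ for all $n \ge n_0$; hence the first term is bounded above by $\tfrac{n+2}{2}\log\rho \to -\infty$, which swamps the logarithmic growth $\tfrac12\log(\pi(n+2))$ of the second term. Therefore $\log\operatorname{eff}(\mathcal{N}) \to -\infty$, i.e.\ $\operatorname{eff}(\mathcal{N}) \to 0$, as claimed.

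The only place that needs a little care---the ``main obstacle''---is making the Stirling step honest rather than heuristic: one should either invoke a genuine two-sided bound such as $\sqrt{2\pi}\,m^{m+1/2}e^{-m} \le \Gamma(m+1) \le e\,m^{m+1/2}e^{-m}$ for $m \ge 1$, or cite the standard asymptotic $\Gamma(m+1)\sim\sqrt{2\pi m}\,(m/e)^m$ and note that any multiplicative $1+o(1)$ factor contributes only $o(1)$ after taking logs. Beyond that, the argument is just algebra of exponents together with the single numerical inequality $2/e<1$.
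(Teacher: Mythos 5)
Your proof is correct. The paper does not actually write out an argument for this corollary---it simply asserts that the limit follows ``by simple application of elementary calculus rules''---and your Stirling-based computation is precisely the calculation being alluded to: indeed, the formula the paper plots in its efficiency figure already substitutes a Stirling-type approximation of $\Gamma\left(\frac{n}{2}+2\right)$, matching your asymptotic form $\sqrt{\pi(n+2)}\left(\frac{2(n+2)}{e(n+4)}\right)^{(n+2)/2}$ up to the choice of correction term. Your reduction to the single inequality $2/e<1$, with the geometric decay swamping the $\frac12\log(\pi(n+2))$ term, is complete and rigorous; the only remark worth adding is that since $\frac{2(n+2)}{e(n+4)}<\frac{2}{e}<\frac{3}{4}$ for every $n\ge 1$, your bound in fact holds with $n_0=1$, so no ``eventually'' qualifier is even needed.
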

The result in~\cref{cor:worst-case} means that there is no lower bound on the inefficiency of the Gaussian kernel. For large enough systems, the Gaussian kernel density estimate is essentially meaningless compared to the Epanechnikov kernel estimate.

\subsection{Sampling from a Epanechnikov distribution}
\label{sec:Epanechnikov-sampling}

Being able to sample from the Epanechnikov distribution is important step for sampling from the approximated posterior.
Similar to~\cref{eq:kernel-properties}, we can write the Epanechnikov distribution with mean $\*\mu$ and covariance $\*\Sigma$ as
\begin{equation}\label{eq:generalized-Epanechnikov}
    \mathcal{E}(x ; \mu, \*\Sigma) = \frac{1}{\sqrt{\lvert\*\Sigma\rvert}}\mathcal{E}\left(\*\Sigma^{-\frac{1}{2}}(x - \mu)\right).
\end{equation}
It is known from~\cite{L1book}  that the random variable,
\begin{equation}\label{eq:sampling-EP}
\begin{gathered}
    \varepsilon = \*\mu + \*\Sigma^{\frac{1}{2}}\sqrt{(n + 4) \eta}\,\*T,\\
    \*T \sim \mathcal{U}(\mathcal{S}^{n-1}),\quad \eta \sim \beta\left(\frac{n}{2}, 2\right),
\end{gathered}
\end{equation}
is a sample from the Epanechnikov distribution~\cref{eq:generalized-Epanechnikov} with mean $\*\mu$ and covariance $\*\Sigma$.
We slightly modify \cref{eq:sampling-EP} in order to accomodate further modification later on.
A sample from the Epanechnikov distribution can be created using the following procedure:
\begin{enumerate}
    \item Sample a random $s$ from the unit Gaussian distribution $\mathcal{N}(\*0_n, \*I_{n\times n})$,
    \item project $s$ onto the shell with radius $\sqrt{n + 4}$, to get $\hat{s} = \frac{\sqrt{n+4}}{\norm{s}}s$,
    \item sample the beta-distributed, $\kappa \sim \beta(n/2, 2)$,
    \item combine to get a sample $\varepsilon = \mu + \*\Sigma^{\frac{1}{2}}\kappa\hat{s}$.
\end{enumerate}

\subsection{Illustrative filtering example (banana problem)}
\label{sec:banana}

As an illustrative example throughout the main text, we make use of the $n$-dimensional `banana' problem, which is a generalization of the two-dimensional problem used in~\cite{michaelson2023ensemble}.

Take the prior to have a mean $\mu$ such that,
\begin{equation}
    \mu_i = \begin{cases}
        \alpha & i = \ceil*{i/2}\\
        0 & \text{sonst}
    \end{cases},
\end{equation}
where in this work we take the parameter $\alpha = -2.5$ as opposed to $\alpha = -3.5$ in some previous works.
Take the prior to have covariance $\Sigma$, such that,
\begin{equation}
    \Sigma = \begin{bmatrix}
        1 & 0.5 & 0 & \cdots & 0\\
        0.5 & 1 & 0.5 &   & \vdots\\
        0 & 0.5  & \ddots & \ddots  & 0\\
        \vdots &   & \ddots &  1 & 0.5\\
        0 & \cdots  & 0  & 0.5  & 1
    \end{bmatrix},
\end{equation}
which is one on the main diagonal,  $0.5$ on the two main off-diagonals, and zero elsewhere.
The prior distribution can either be Gaussian or Epanechnikov with the same mean $\mu$ and covariance $\Sigma$ parameters.

Take the measurement to be a magnitude,
\begin{equation}
    h(x) = \norm{x},
\end{equation}
where the standard Euclidean 2-norm is used. The  measurement realization is $y = 1$ about which we have Gaussian uncertainty with variance $R = 0.01$. This is called the banana problem as in two dimensions the posterior resembles a banana fruit, and is a common example of non-Gaussian uncertainty arising in aerospace applications.

\section{Ensemble Gaussian Mixture Filter}

We now present the Ensemble Gaussian mixture filter, which we subsequently generalize to make use of the optimal Epanechnikov mixture.

The filter is presented as three distinct steps: 1) the Kernel density estimation, 2) the Gaussian sum update, and 3) the resampling procedure.

\subsection{Kernel density estimation}
Given an ensemble of $N$ samples from some unknown prior distribution,
\begin{equation}
    \*X^- = \left[x_1^-, x_2^-, \dots, x_N^-\right],
\end{equation}
first approximate the covariance of the distribution,
\begin{equation}\label{eq:covariance-approximation}
    \widetilde{\*\Sigma}^- = \frac{1}{N-1}\*X^-\left(\*I_{N\times N} - \frac{1}{N}\*1_N\*1_N^T\right)\*X^{-,T},
\end{equation}
as a proxy to the known covariance in~\cref{eq:true-covariance}.

Second, build KDE estimate of the distribution as in~\cref{eq:KDE-estimate},
\begin{equation}\label{eq:EnGMF-prior-GMM}
    \widetilde{f}_{\*X^-}(x) = \frac{1}{N} \sum_{i=1}^N \mathcal{N}(x; x_i^-, h_{\mathcal{N}}^2 \widetilde{\*\Sigma}^-),
\end{equation}
by making use of the covariance approximation in~\cref{eq:covariance-approximation} and the optimal bandwidth $h_{\mathcal{N}}$ defined by~\cref{eq:optimal-bandwidth} with constants defined for the normal distribution in lemma~\ref{lem:Gaussian-kernel-constants}.

\subsection{Gaussian sum update}

\begin{figure*}
    \centering
    \includegraphics[width=0.9\linewidth]{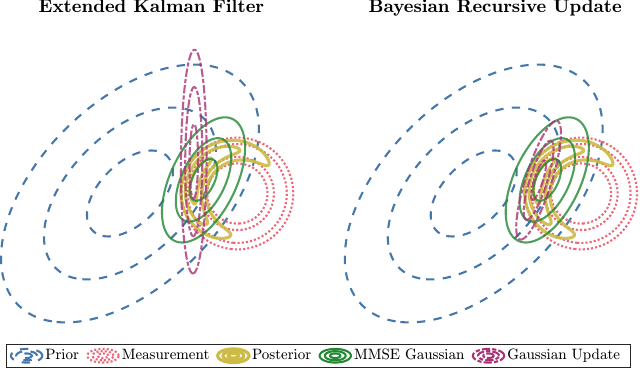}
    \caption{Comparison of Gaussian sum updates for inference on a Gaussian distribution with a non-linear measurement. Three standard deviations of the Gaussian prior are represented by the large dashed blue lines on the the left of the figures.
    The measurement distribution is represented by the dotted red circles
    The minimal mean squared error Gaussian approximation to the posterior is represented by solid green lines.
    The left figure represents an update using the extended Kalman filter, while the right figure represents an update using the Bayesian recursive update.}
    \label{fig:Gaussian-sum-update}
\end{figure*}
 
We can take advantage of the Gaussian mixture structure, to perform an
Gaussian sum update on each one of the components to get an approximation of the posterior. The most straightforward approach to performing a Gaussian update is to make use of the extended Kalman filter from the prior Gaussian mixture~\cref{eq:EnGMF-prior-GMM} to an approximation of the posterior,
\begin{equation}\label{eq:EnGMF-update}
    \begin{aligned}
        x^\sim_{i} &= x^-_{i} - \*G_{i}\left(h(x^-_{i}) - y\right),\\
        h_{\mathcal{N}}^2\widetilde{\*\Sigma}^\sim_{i} &= \left(\*I -  \*G_{i}\*H_{i}^T\right)h_{\mathcal{N}}^2\widetilde{\*\Sigma}^-_{i},\\
        \*G_{i} &= h_{\mathcal{N}}^2\widetilde{\Sigma}^-_{i}\*H_{i}^T{\left(\*H_{i}h_{\mathcal{N}}^2\widetilde{\Sigma}^-_{i}\*H_{i}^T + \*R_i\right)}^{-1},\\
        w_{i} &\propto  \mathcal{N}\left(y\,;\, h(x^-_{i}),\, \*H_{i}h_{\mathcal{N}}^2\widetilde{\*\Sigma}^-_{i}\*H_{i}^T + \*R\right),\\
        \*H_{i} &= \left.\frac{d h}{d x}\right\rvert_{x = x^-_{i}},
    \end{aligned}
\end{equation}
with $\widetilde{x}_i$ representing the mean of the $i$th posterior component, $h_{\mathcal{N}}^2\widetilde{\*\Sigma}^\sim_{i}$ representing its covariance, and $w_i$ its weight.

Thus, the Gaussian mixture,
\begin{equation}\label{eq:EnGMF-posterior-approximation}
    \widetilde{f}_{\*X^+}(x) = \sum_{i=1}^N w_i\,\mathcal{N}(x; x_i^{\sim}, h_{\mathcal{N}}^2\widetilde{\*\Sigma}^\sim_i),
\end{equation}
is an approximation to the posterior. Note that as the GMM update~\cref{eq:EnGMF-update} is exact when the measurement $h$ in~\cref{eq:measurement} is linear, the measurement error $\eta$ is Gaussian, and when the prior mixture model is a Gaussian mixture. 

When the measurements are non-linear, the update in~\cref{eq:EnGMF-update} is equivalent to performing a extended Kalman filter (EKF) on each one of the components, which is not equal to the optimal Gaussian that describes the (local) posterior distribution.

In order to alleviate some of the error from this approximation, in this work we make use of the Bayesian recursive update filter (BRUF)~\cite{michaelson2023ensemble, michaelson2023recursive}, which was previously used for the EnGMF in~\cite{durant2024you}.

In the BRUF, the EKF-based update in~\cref{eq:EnGMF-update} is essentially $M$  iterations of the EKF with an inflated measurement covariance $\*R_i \xleftarrow[]{}M\*R_i$. For more information on the BRUF as it applies to the EnGMF see~\cite{durant2024you}.

A visual representation of the difference between the EKF based update and the BRUF update, with $M=5$. on the two-dimensional banana example~\cref{sec:banana} is provided in in~\cref{fig:Gaussian-sum-update}. The optimal, in KL-divergence, Gaussian is also plotted as a reference. As can be seen, the BRUF is capable of both representing the posterior mean and posterior covariane in a much more correct fashion, as it re-linearizes the Jacobian of the measurement around each successive best-estimate.

\subsection{Resampling}

The final step of the EnGMF is to resample from the posterior approximation~\cref{eq:EnGMF-posterior-approximation} through the use of standard techniques:
\begin{enumerate}
    \item first sample from the probability mass function defined by the weights $\{w_i\}_{i=1}^N$, to get a mode $j$,
    \item then sample from the normal distribution  defined by the mode, $\mathcal{N}(x_j^\sim, h_{\mathcal{N}}^2\widetilde{\*\Sigma}^\sim_i)$,
    \item repeat as many times as samples are required.
\end{enumerate}
Of note is that the resampling procedure above does not produce independent and identically distributed samples, merely exchangeable samples, as the means, covariances, and weights are themselves random variables on which the samples are conditioned. In the authors' experience this is more of a technical problem and not one which contributes any significant error in practice.

We make use of all of the machinery of the EnGMF to generalize to a Epanechnikov kernel mixture.

\section{Ensemble Epanechnikov Mixture Filter}
\label{sec:EnEMF}

\begin{figure*}
    \centering
    \includegraphics[width=0.9\linewidth]{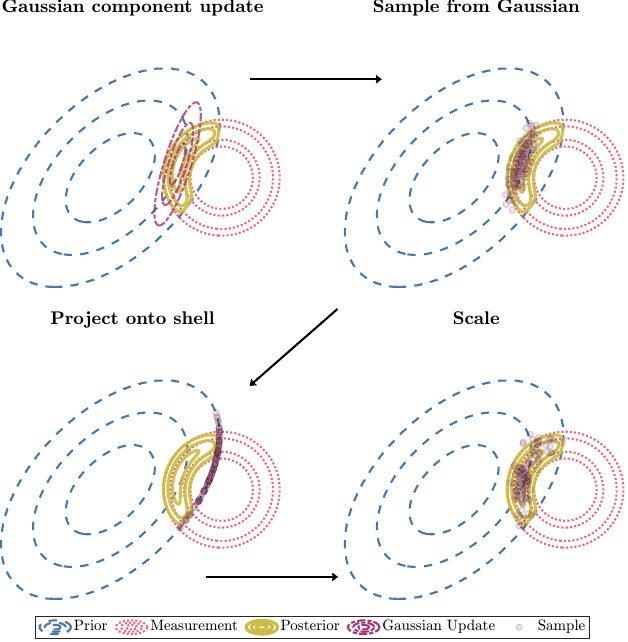}
    \caption{A visual description of the approximated posterior Epanechnikov sampling procedure for one Epanechnikov mode. First, in the top left panel the Gaussian component update from the prior to the candidate posterior is performed using the measurement likelihood with samples taken therefrom in the top right panel. Next, in the bottom left panel the samples are projected onto the shell of the prior Epanechnikov component, and finally the projected samples are randomly scaled in the bottom left panel.}
    \label{fig:sampling-procedure}
\end{figure*}

We now present the ensemble Epanechnikov mixture filter as a generalization of the ensemble Gaussian mixture filter to the more efficient Epanechnikov mixture model.

Just like the EnGMF, the first step of the EnEMF is to find the statistical covariance~\cref{eq:covariance-approximation}. The second step is to perform kernel density estimation,
\begin{equation}\label{eq:prior-EMM}
    \widetilde{f}_{\*X^-}(x) = \sum_{i=1}^N \frac{1}{N}\mathcal{E}(x; x_i^-, h^2_\mathcal{E}\widetilde{\*\Sigma}^-_i)
\end{equation}
but this time using the Epanechnikov distribution~\cref{eq:generalized-Epanechnikov} and the Epanechnikov bandwidth from lemma~\ref{lem:Epanechnikov-kernel-constants}.

Given the~\cref{eq:prior-EMM} the exact posterior distribution is given by
\begin{equation}\label{eq:EMM-posterior}
    \widetilde{f}_{\*X^+}(x) = \sum_{i=1}^N w_i\,\widetilde{\mathcal{E}}\left(x; x_i^-, h^2_\mathcal{E}\widetilde{\*\Sigma}^-_i, y, \*R\right),
\end{equation}
where the distribution above is of the form~\cref{eq:posterior-mixture-mode},
\begin{equation}\label{eq:EMM-posterior-mode}
    \widetilde{\mathcal{E}}\left(x; x_i^-, h^2_\mathcal{E}\widetilde{\*\Sigma}^-_i, y, \*R\right) \propto \begin{multlined}
        \mathcal{E}(x; x_i^-, h^2_\mathcal{E}\widetilde{\*\Sigma}^-_i)\\
        \quad\cdot\mathcal{N}(y\,;\, h(x_i^-), \*R) 
    \end{multlined},
\end{equation}
which does not have a simple closed form representation.
We show later that it is possible to sample from~\cref{eq:EMM-posterior-mode} by making use of the Gaussian sum update~\cref{eq:EnGMF-posterior-approximation} either in the EKF or the BRUF variant,
where the terms are almost identical to the EnGMF update~\cref{eq:EnGMF-update} except for the bandwidth factor $h_{\mathcal{E}}$. The weights in the Gaussian sum update now require special attention, in the next section.

\subsection{Weights}

For the weights $\{w_i\}_{i=1}^N$ in~\cref{eq:EMM-posterior}, we can (i) again use the same weights defined in the Gaussian sum update~\cref{eq:EnGMF-update}, (ii) attempt to exactly compute the integral in~\cref{eq:posterior-weights}, or (iii) approximate the integral through a closed form Gaussian solution.
In the authors' experience, (i) does not produce accurate results, and (ii) incurs a significant computational cost in the high-dimensional setting, therefore (iii) is the most reasonable option for now.

\subsubsection{Gaussian weight approximation}
The first type of approximation that we look at is through approximating the weights by a distribution that is Gaussian.

\begin{theorem}\label{thm:EK-approximation}
    A first order approximation to Epanechnikov kernel is given by a Gaussian approximation with covariance $\frac{n+4}{2}\*\Sigma$.
\end{theorem}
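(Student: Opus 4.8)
The plan is to read off the Gaussian by a first-order expansion of the \emph{log} of the Epanechnikov kernel about its mode. Working first with the unit-covariance version \cref{eq:Epanechnikov-distribution}, I would factor the density as $\mathcal{E}(x) \propto (n+4)(1 - x^{\mathsf{T}}x/(n+4))$ on its support and apply the elementary relation $1 - t \approx e^{-t}$ with $t = x^{\mathsf{T}}x/(n+4)$, so that $\mathcal{E}(x) \approx (\text{const}) \cdot e^{-x^{\mathsf{T}}x/(n+4)}$. Matching the quadratic form $e^{-\frac12 x^{\mathsf{T}} \*P^{-1} x} = e^{-x^{\mathsf{T}}x/(n+4)}$ forces $\*P^{-1} = \tfrac{2}{n+4}\*I$, i.e. $\*P = \tfrac{n+4}{2}\*I$, which is the claimed covariance in the $\mu=0$, $\*\Sigma=\*I$ case. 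Equivalently — and this is the clean way to make ``first order'' precise — I would set $g(x) = \log\mathcal{E}(x) = \text{const} + \log(n+4-x^{\mathsf{T}}x)$, check $\nabla_x g(0) = 0$ and compute $\nabla_x^2 g(0) = -\tfrac{2}{n+4}\*I$, and recognize the Laplace (second-order-in-the-exponent, hence Gaussian) approximation whose precision matrix is exactly $\tfrac{2}{n+4}\*I$.

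To obtain the general statement I would apply the whitening change of variables $x \mapsto \*\Sigma^{-1/2}(x-\mu)$: by the definition \cref{eq:generalized-Epanechnikov} together with the matching Gaussian scaling rule in \cref{eq:kernel-properties}, approximating $\mathcal{E}(\*\Sigma^{-1/2}(x-\mu))$ by $\mathcal{N}(\,\cdot\,; 0, \tfrac{n+4}{2}\*I)$ and unwinding the Jacobian $\lvert\*\Sigma\rvert^{-1/2}$ yields $\mathcal{E}(x;\mu,\*\Sigma) \approx \mathcal{N}(x;\mu,\tfrac{n+4}{2}\*\Sigma)$. No normalizing constants need to be tracked through this argument: the approximating Gaussian is automatically a density, so it is enough to match the location of the mode, the vanishing gradient, and the Hessian of the log-density there.

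The part to be careful about is conceptual rather than computational: this is a local approximation at the mode. The Epanechnikov kernel is compactly supported on $B_n(\sqrt{n+4})$ while the Gaussian is not, and the substitution $1-t\approx e^{-t}$ degrades as $\norm{x}$ approaches $\sqrt{n+4}$, so the statement should be framed as a first-order (Laplace-type) expansion valid near the mode. I would also flag the mild surprise that, although the unit Epanechnikov has covariance $\*I$ by construction, the Gaussian that reproduces its behaviour \emph{at the mode} is the wider $\mathcal{N}(0,\tfrac{n+4}{2}\*I)$ rather than the moment-matched $\mathcal{N}(0,\*I)$; a one-line remark reconciling the mode-fit with the moment-fit could be added, but it is not needed for the statement as posed.
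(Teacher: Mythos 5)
Your proposal is correct and follows essentially the same route as the paper: the paper's proof is precisely the first-order expansion of $\log\left(n+4-(x-\mu)^{\mathsf{T}}\*\Sigma^{-1}(x-\mu)\right)$ about the mode, matching the resulting quadratic exponent $-\frac{1}{n+4}(x-\mu)^{\mathsf{T}}\*\Sigma^{-1}(x-\mu)$ to $-\frac12(x-\mu)^{\mathsf{T}}\*P^{-1}(x-\mu)$ to read off $\*P=\frac{n+4}{2}\*\Sigma$. Your Laplace-approximation framing (vanishing gradient and Hessian $-\frac{2}{n+4}\*I$ at the mode) and your remarks on the compact support and the mode-fit versus moment-fit discrepancy are accurate elaborations of the same argument.
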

\begin{proof}
Observe the series expansion of the logarithm
\begin{equation}
\begin{multlined}
    \log\left(n + 4 - (x - \mu)^T \*\Sigma^{-1} (x - \mu)\right) \\ \,\,=\log\left(n+4\right) - \frac{1}{n+4}(x - \mu)^T \*\Sigma^{-1} (x - \mu) \\
    + \text{h.o.t.},
\end{multlined}
\end{equation}
meaning that the Epanechnikov distribution can be approximated by the Gaussian distribution,
\begin{equation}
    \mathcal{E}\left(x; \mu, \*\Sigma\right) \approx \mathcal{N}\left(x; \mu, \frac{n+4}{2}\Sigma\right),
\end{equation}
as required.
\end{proof}

Therefore by using the derivation in~\cref{thm:EK-approximation} for the weights  it is possible to approximate the weights of the EnEMF update as,
\begin{equation}\label{eq:EnEMF-Gaussian-weight-approximation}
    w_i \propto \mathcal{N}(y\,;\, h(x^-_i), \*H_i \frac{s_{\mathcal{E}} h^2_{\mathcal{E}}(n+4)}{2}\widetilde{\*\Sigma}^-_i\*H_i^T + \*R),
\end{equation}
where $s_{\mathcal{E}}$ is scaling factor to help taper the effect of underweighting. With an optimally tuned scaling factor $s_{\mathcal{E}}$, the weight update approximation in~\cref{eq:EnEMF-Gaussian-weight-approximation} should approach the true mixture model weight update in~\cref{eq:posterior-weights}.
Importantly, the weight update in~\cref{eq:EnEMF-Gaussian-weight-approximation} has the same computational cost as the weight update in the EnGMF~\cref{eq:EnGMF-update}, and leverages a very similar implementation---the only difference involving a slightly modified covariance. As an additional consideration the weight update in~\cref{eq:EnEMF-Gaussian-weight-approximation} is compatible with convergence of the EnEMF in the limit of particle number, though a proof of this is ancillary to this work.

\subsubsection{Unscented weight approximation}

An alternative approximation to the weights is by using the unscented transform~\cite{durant2024you,sarkka2023bayesian}. Take the $\sigma$-points,
\begin{equation}
\begin{aligned}
    \Xi_0 &= \mu,\\
    \Xi_j &= \mu + \sqrt{n + \lambda}\left[\sqrt{\*\Sigma}\right]_i, \quad j = 1, \dots, n,\\
    \Xi_{j+n} &= \mu - \sqrt{n + \lambda}\left[\sqrt{\*\Sigma}\right]_i, \quad j = 1, \dots, n,
\end{aligned}
\end{equation}
where $\lambda$ is,
\begin{equation}
    \lambda = \alpha^2(n + \kappa) - n,
\end{equation}
with parameters $\alpha$ and $\kappa$. In this work we take $\alpha = 1$ and $\kappa = 3$. 

The Gaussian $\sigma$-points can be transformed into Epanechnikov by observing from~\cref{eq:sampling-EP} that in any direction $\*t$ from the mean $\mu$, the marginal of a Gaussian is a scalar half-Gaussian distribution, and the marginal of a Epanechnikov distribution is beta-distributed. Through this conversion we can arrive at the Epanechnikov $\sigma$-points, $\Xi^{\mathcal{E}}$.
Numerically,
\begin{equation}
    \begin{gathered}
        m_j = \norm{\sqrt{\*\Sigma}^{-1}(\Xi_j - \mu)}, \*t_j = \sqrt{\*\Sigma}^{-1}(\Xi_j - \mu)/m_j,\\
        \Xi^{\mathcal{E}}_j = \sqrt{n+4}P_b^{-1}(P_{h}(m_j))\*t_j
    \end{gathered}
\end{equation}
where $P_{h}$ is the scalar half-Gaussian cumulative distribution function with mean zero and unit covariance, and $P_b$ is the beta cumulative distribution function with parameters $\text{Beta}(n/2, 2)$.

The weights are proportional to the expected value of the likelihood, which is approximated through the sigma points,
\begin{equation}\label{eq:EnEMF-unscented-weights-approximation}
\begin{aligned}
    w_i &\propto \mathbb{E}_{x\sim p(x)}[p(y|X=x_i)],\\
    &\approx \sum_{j = 0}^{2n} W_j \mathcal{N}(y ; h(\Xi^{\mathcal{E}}_j), \*\Sigma_y + \*R),
\end{aligned}
\end{equation}
with the covariance
\begin{equation}
\begin{gathered}
    \*\Sigma_y = \sum_{j = 0}^{2 n} W_j^C \left(h(\Xi^{\mathcal{E}}_j) - \overline{h(\Xi^{\mathcal{E}}_j)} \right)\left(h(\Xi^{\mathcal{E}}_j) - \overline{h(\Xi^{\mathcal{E}}_j)} \right)^T,\\
    \overline{h(\Xi^{\mathcal{E}}_j)} = \sum_{j = 0}^{2n} W_j h(\Xi^{\mathcal{E}}_j),
\end{gathered}
\end{equation}
and where the $\sigma$-point weights are defined as,
\begin{equation}
\begin{gathered}
    W_0 = \frac{\lambda}{\lambda + n},\, W_0^C = \frac{\lambda}{\lambda + n} + 1 - \alpha^2 + \beta,
    \\
    W_j = W_j^C = \frac{1}{2(\lambda + n)},\, j = 1, \dots 2n,
\end{gathered}
\end{equation}
with the parameter $\beta$ which for this work is fixed to $\beta = 2$.

\subsection{Resampling}
\label{sec:resampling}

The most important step is the resampling procedure, as it is the one that is most heavily modified from that of the EnGMF, and relies on the Epanechnikov sampling procedure described in~\cref{sec:Epanechnikov-sampling}.
We can perform resampling from the EnEMF posterior~\cref{eq:EMM-posterior} in the following way:
\begin{enumerate}
    \item generate a sample from the discrete distribution defined by the weights $\{w_i\}_{i=1}^N$, which defines the mode $j$,
    \item generate a sample from the $j$th mode of the Gaussian distribution defined by the Gaussian sum update 
    \begin{equation}
        u \sim \mathcal{N}(x_j^{\sim}, h_{\mathcal{E}}^2\widetilde{\*\Sigma}^\sim_j),
    \end{equation}
    \item project $u$ onto the unit shell defined by the prior mode $\mathcal{E}(x_j^-, h_{\mathcal{E}}^2\widetilde{\*\Sigma}^-_j)$,
    \begin{equation}
    \begin{aligned}
        s &= \left(h_{\mathcal{E}}^2\widetilde{\*\Sigma}^-_j\right)^{-\frac{1}{2}}\!\left(u - x_j^-\right),\\
        \hat{s} &=  \frac{\sqrt{n + 4}}{\norm{s}}s,
    \end{aligned}
    \end{equation}
    to find the sample direction $\hat{s}$ relative to the prior mode mean $x_j^-$, meaning that $x_j^- + \left(h_{\mathcal{E}}^2\widetilde{\*\Sigma}^-_j\right)^{\frac{1}{2}}\!\!\hat{s}$ lies on the boundary of the prior mode, then
    \item sample the variable $\kappa$ from the modified beta distribution in the direction $\hat{s}$
    \begin{equation}\begin{multlined}\label{eq:posterior-magnitude}
        \kappa \sim \frac{1}{2}(n+2) z^{n-1} (1 - z^2)\\\qquad\cdot
        \mathcal{N}\left(y\,;\, h\left(x_j^- + \left(h_{\mathcal{E}}^2\widetilde{\*\Sigma}^-_j\right)^{\frac{1}{2}}\! z \hat{s}\right), \*R\right),
    \end{multlined}
    \end{equation}
    in the scalar variable $0 \leq z < 1$, through an inverse CDF method,
    \item and finally, combine with the prior mean and covariance to get a sample
    \begin{equation}
        \varepsilon = x_j^- + \left(h_{\mathcal{E}}^2\widetilde{\*\Sigma}^-_j\right)^{\frac{1}{2}}\kappa\hat{s}.
    \end{equation}
\end{enumerate}
A visual interpretation of the resampling procedure can be found in~\cref{fig:sampling-procedure}. The resampling procedure is visualized using the 2-dimensional banana problem~\cref{sec:banana} with the Gaussian prior replaced by a Epanechnikov prior with the same mean and covariance.

We now prove that the resampling procedure above is exact when the measurement operator $h$ is linear, as mentioned before, a Gaussian sum update is exact only when the measurement is linear. We start by showing that the Gaussian sum update can be used to describe the distribution of the direction of the posterior.
\begin{lemma}[Posterior ellipsoid]
\label{lem:posterior-ellipsoid}
Given prior knowledge $T$ that is uniformly distributed on the ellipsoid,
\begin{equation}\label{eq:ellipsoid}
    S^{n-1}_{\mu,\*\Sigma} = \left\{x \,\middle|\, (x - \mu)^T\*\Sigma^{-1}(x - \mu) = 1 \right\},
\end{equation}
with center $\mu$ and shape defined by the symmetric positive definite matrix $\*\Sigma$, and given a measurement $Y=y$ about which our uncertainty is unbiased Gaussian with covariance $\*R$ from the linear measurement operator $\*H$,  then the distribution of the Bayesian posterior is proportional to,
\begin{equation}
    p(T | Y = y) \propto 
    \begin{cases}
        \mathcal{N}(t ; \mu^+, \*\Sigma^+) & t \in S^{n-1}_{\mu,\*\Sigma}\\
        0 & \text{sonst}
    \end{cases},
\end{equation}
where $\mu^+$ and $\*\Sigma^+$ are,
\begin{equation}\label{eq:KF-eq}
\begin{gathered}
    \mu^+ = \mu - \*\Sigma \*H^T \left(\*H\*\Sigma \*H^T + \*R\right)^{-1}(\*H \mu - y),\\
    \*\Sigma^+ = \*\Sigma - \*\Sigma \*H^T \left(\*H\*\Sigma \*H^T + \*R\right)^{-1}\*H\*\Sigma,
\end{gathered}
\end{equation}
from the Kalman filter equations.
\end{lemma}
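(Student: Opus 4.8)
The plan is to apply Bayes' rule and then exploit that a Gaussian density is constant along the ellipsoid on which it is centred, which collapses the problem to the elementary linear--Gaussian conjugacy identity; no ellipsoid-specific integral is ever needed.

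First I would write the prior: since $T$ is uniform on $S^{n-1}_{\mu,\*\Sigma}$, its density with respect to the $(n-1)$-dimensional surface measure on the set \cref{eq:ellipsoid} is a constant times the indicator $\mathbf 1[t \in S^{n-1}_{\mu,\*\Sigma}]$. With the linear measurement operator $\*H$ and the Gaussian measurement likelihood $p(y\mid t) = \mathcal N(y;\*H t,\*R)$, Bayes' rule \cref{eq:Bayes-rule} gives
\[
 p(t \mid y) \;\propto\; \mathcal N(y;\*H t,\*R)\,\mathbf 1[t \in S^{n-1}_{\mu,\*\Sigma}],
\]
all factors independent of $t$ having been absorbed into the proportionality sign.

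The key observation is that on the ellipsoid the \emph{full} Gaussian density $\mathcal N(t;\mu,\*\Sigma)$ depends on $t$ only through $(t-\mu)^{\mathsf T}\*\Sigma^{-1}(t-\mu)$, which equals $1$ there by definition of $S^{n-1}_{\mu,\*\Sigma}$; hence $\mathcal N(t;\mu,\*\Sigma)$ is a $t$-independent constant on the ellipsoid, and multiplying the displayed expression by it changes nothing up to a constant. I would then invoke the standard linear--Gaussian conjugacy identity
\[
 \mathcal N(y;\*H t,\*R)\,\mathcal N(t;\mu,\*\Sigma) = \mathcal N(y;\*H\mu,\*H\*\Sigma\*H^{\mathsf T}+\*R)\,\mathcal N(t;\mu^+,\*\Sigma^+),
\]
obtained by completing the square in $t$, in which $\mu^+$ and $\*\Sigma^+$ are precisely the Kalman filter quantities \cref{eq:KF-eq} and the leading factor does not depend on $t$. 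Substituting and discarding $t$-independent factors yields $p(t\mid y) \propto \mathcal N(t;\mu^+,\*\Sigma^+)\,\mathbf 1[t \in S^{n-1}_{\mu,\*\Sigma}]$, which is exactly the asserted piecewise form.

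The only routine items left to fill in are: that $\*\Sigma^+$ in \cref{eq:KF-eq} is positive definite, which follows from the information-form identity $(\*\Sigma^+)^{-1} = \*\Sigma^{-1} + \*H^{\mathsf T}\*R^{-1}\*H$ (a Woodbury rearrangement of the expression in \cref{eq:KF-eq}), so that the right-hand-side Gaussian is well defined; and the completing-the-square computation underlying the conjugacy identity, where one checks that the residual factor $\mathcal N(y;\*H\mu,\*H\*\Sigma\*H^{\mathsf T}+\*R)$ carries no $t$ dependence. The one genuine subtlety---the ``main obstacle'' such as it is---is the first reduction: recognising that, for a statement made only up to proportionality, the uniform-on-the-ellipsoid prior may be replaced by the restriction of the full Gaussian $\mathcal N(\mu,\*\Sigma)$ to that ellipsoid, after which the result is nothing more than Gaussian conjugacy.
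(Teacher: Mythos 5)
Your proof is correct and follows essentially the same route as the paper's: the paper realizes the uniform-on-ellipsoid prior as a Gaussian $\mathcal{N}(\mu,\*\Sigma)$ conditioned on the constraint and then swaps the order of conditioning so that the Kalman/conjugacy update is applied first, which is exactly your observation that $\mathcal{N}(t;\mu,\*\Sigma)$ is constant on $S^{n-1}_{\mu,\*\Sigma}$ and may be multiplied in for free before completing the square. Your version simply makes the density manipulation explicit where the paper argues at the level of conditioning, so no substantive difference.
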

\begin{proof}
Let $X$ be Gaussian with mean $\mu$ and covariance $\*\Sigma$, and let $\Phi$ represent the information of the constraint onto the ellipsoid $S^{n-1}_{\mu, \*\Sigma}$. The random variable $T = (X | \Phi)$ is uniformly distributed on $S^{n-1}_{\mu, \*\Sigma}$.
Conditioning by the measurement, we can observe that 
\begin{equation*}
    (T | Y = y) = (X | \Phi, Y = y),
\end{equation*}
and, as the order of conditioning does not matter, the distribution of the random variable $X^+ = (X | Y = y)$, can be computed first through the Kalman filter equations~\cref{eq:KF-eq}, then projected onto the ellipsoid as,
\begin{equation*}
    (T | Y = y) = (X^+ | \Phi),
\end{equation*}
as required.
\end{proof}

\begin{theorem}
    Given a Epanechnikov distributed random variable $X\sim\mathcal{E}(\mu, \Sigma)$, and a measurement $y$ about which we have Gaussian uncertainty with covariance $\*R$, with linear measurement operator $\*H$, then the procedure in~\cref{sec:resampling} samples from the posterior of the Epanechnikov-times-Gaussian distribution.
\end{theorem}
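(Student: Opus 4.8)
The plan is to whiten the problem, decompose the prior Epanechnikov kernel into an angular and a radial part, and verify that the resampling procedure draws each part from its correct conditional. Write $\*\Sigma \coloneqq h_{\mathcal{E}}^2 \widetilde{\*\Sigma}^-_j$ and $\mu \coloneqq x_j^-$, and substitute $w = \*\Sigma^{-1/2}(x - \mu)$. By the scaling property in~\cref{eq:kernel-properties}, this substitution carries the prior $\mathcal{E}(x;\mu,\*\Sigma)$ to the standardized kernel $\mathcal{E}(w) \propto (n + 4 - w^{\mathsf{T}} w)$ supported on $B_n(\sqrt{n+4})$, carries the linear measurement to $y = h(\mu) + \*H\*\Sigma^{1/2} w + \eta$, and carries the target posterior~\cref{eq:EMM-posterior-mode} to $q(w) \propto (n + 4 - w^{\mathsf{T}} w)\,\mathcal{N}\!\left(y;\, h(\mu) + \*H\*\Sigma^{1/2}w,\, \*R\right)$ on the same ball. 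Since $x \mapsto w$ is a bijection, it suffices to show that the procedure of~\cref{sec:resampling}, pushed through $x \mapsto w$, outputs a draw with density $q$.

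I would then use the radial representation behind~\cref{sec:Epanechnikov-sampling}: a standardized Epanechnikov variate can be written $w = \kappa\, \hat s$ with $\hat s$ uniform on the shell $\norm{\hat s} = \sqrt{n+4}$ and $\kappa \in [0,1)$ with density $\propto \kappa^{n-1}(1 - \kappa^2)$, independently. Changing variables in $q$ from $w$ to $(\hat s,\kappa)$, the posterior factors as $q(\hat s,\kappa) \propto \kappa^{n-1}(1-\kappa^2)\,\mathcal{N}\!\left(y;\, h\!\left(\mu + \*\Sigma^{1/2}\kappa\hat s\right),\, \*R\right)$, where linearity of $\*H$ is used to move $h$ outside. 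Two observations then close the argument. First, conditioned on $\hat s$, the law of $\kappa$ under $q$ is exactly the modified-beta density of~\cref{eq:posterior-magnitude} in step~4, since along the ray $z \mapsto \mu + \*\Sigma^{1/2} z\hat s$ the Gaussian likelihood collapses to the scalar Gaussian in $z$ written there; hence steps~4--5 sample $\kappa$, and therefore $\varepsilon$, from $q(\,\cdot \mid \hat s)$. Second, steps~2--3 sample $\hat s$ from the marginal $q(\hat s)$: conditioning the prior on the magnitude places the state uniformly on an ellipsoid, and~\cref{lem:posterior-ellipsoid} identifies the posterior direction on that ellipsoid with the direction of the Gaussian-sum (Kalman) update of~\cref{eq:EnGMF-update} --- which, for linear $h$, is exact --- read off after projecting onto the prior shell as in step~3. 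Assembling the two yields a draw with density $q(\hat s)\,q(\kappa\mid\hat s) = q(\hat s,\kappa)$, and undoing the whitening proves the claim.

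The main obstacle is the second observation. The Kalman gain in~\cref{eq:EnGMF-update} scales with the prior covariance, so a priori the posterior direction on the ellipsoid $S^{n-1}_{\mu,\,c\,\*\Sigma}$ depends on the radial scale $c$, whereas steps~2--3 perform a single Gaussian-sum update (effectively $c=1$) followed by a single radial projection onto the $c = n+4$ shell. Showing that this produces precisely the direction law $q(\hat s)$ --- that is, that the scale of the update washes out and that the resulting density of $\hat s$ carries an $\hat s$-independent normalizer --- is the delicate step. I expect it to hinge on the radial symmetry of the kernel via~\cref{lem:radially-symmetric integral} together with the independence of the radius and the direction of a whitened Gaussian; once that is established, the rest is the bookkeeping already arranged by the whitening above.
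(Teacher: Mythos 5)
You take essentially the same route as the paper: whiten, split the Epanechnikov variate into a direction $\hat s$ on the shell of radius $\sqrt{n+4}$ and a magnitude $\kappa$ with prior density $\propto \kappa^{n-1}(1-\kappa^2)$, verify that step~4 draws $\kappa$ from the exact conditional given $\hat s$ (your first observation is correct and matches~\cref{eq:posterior-magnitude}), and lean on~\cref{lem:posterior-ellipsoid} for the direction. Your instinct to single out the direction step as the delicate one is sound --- the paper's own proof dismisses it as ``a trivial application'' of the lemma --- but the specific difficulty you name is not the real one, and the resolution you anticipate cannot close the gap. The scale $c$ of the prior covariance used in the Kalman update does wash out: on any level set of the prior Gaussian, the restricted posterior density is proportional to the likelihood alone, independently of $c$, so that part of your worry dissolves.

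The genuine gap is different. Because step~4 draws $\kappa$ from the conditional $q(\kappa\mid\hat s)$, exactness of the joint requires that steps~2--3 deliver $\hat s$ from the \emph{marginal}
\begin{equation*}
q(\hat s) \propto \int_0^1 \kappa^{n-1}(1-\kappa^2)\,\mathcal{N}\!\left(y;\,\*H\mu + \*H\*\Sigma^{1/2}\kappa\hat s,\,\*R\right)\mathrm{d}\kappa,
\end{equation*}
i.e., the likelihood integrated against the Epanechnikov radial profile. What steps~2--3 actually produce is the radial projection (about the prior mean, in whitened coordinates) of a draw from $\mathcal{N}(\mu^+,\*\Sigma^+)$; using the Gaussian product identity to write that posterior density as proportional, in $w$, to $e^{-\norm{w}^2/2}L(w)$ with $L$ the whitened likelihood, its direction law is $\propto \int_0^\infty r^{n-1}e^{-r^2/2}L(rv)\,\mathrm{d}r$ --- the same likelihood integrated against the \emph{Gaussian} radial profile. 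Since $L(rv)$ couples $r$ and $v$ whenever $\*H\*\Sigma^{1/2}v\neq 0$, these two integrals are generically different functions of $v$, and neither radial symmetry nor the radius--direction independence of a whitened Gaussian helps: $\mathcal{N}(\mu^+,\*\Sigma^+)$ is neither centered at the prior mean nor isotropic after whitening by the \emph{prior} covariance. Two further silent substitutions compound this: \cref{lem:posterior-ellipsoid} gives the direction law \emph{conditioned on a fixed radius}, not the marginal, and radially projecting a Gaussian sample onto a shell yields the projected-normal law, not the restriction of the Gaussian density to that shell. So the identity you defer is not merely unproven --- it fails in general, and the procedure is exact only in the radial conditional, not in the direction marginal. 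You should be aware that the paper's proof has exactly the same hole; your write-up at least makes the missing step visible, but it does not establish the theorem as stated.
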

\begin{proof}
    Recall from~\cref{eq:sampling-EP} that the marginal distribution of the Epanechnikov distribution in any  direction $s$ on the Ellipsoid $S^{n-1}_{\mu, (n+4)\*\Sigma}$ from the center is $\beta(n/2, 2)$. We can therefore write each Epanechnikov sample as the product of a magnitude $\kappa$ and a direction $s$ plus the sum of the ellipsoid center $\mu$. For the direction $\kappa$, its posterior is a trivial application of~\cref{lem:posterior-ellipsoid}. For the magnitude $\kappa$, its scalar distribution is defined simply by~\cref{eq:posterior-magnitude}, as required.
\end{proof}

\section{Numerical Experiments}
\label{sec:numerical-experiment}

The aim of the numerical experiments is two-fold: first, we show that the EnGMF error is highly dependent on the dimension $n$ while the EnEMF erorr is robust to change in $n$, and second, we show that on a dynamical example that the EnEMF requires significantly less samples $N$ than the EnGMF to beat the state-of-the-art ensemble Kalman filter.

\subsection{Banana problem}

\begin{figure}[t]
    \centering
    \begin{tikzpicture}
    \begin{axis}[clean,
        cycle list name=tol,
        xtick={1, 10, 20, 30, 40, 50},
        table/col sep=comma,
        xmin = 0,
        xmax = 51,
        ymin = 0,
        ymax = 1.25,
        clip = true,
        xlabel = {Dimension ($n$)},
        ylabel = {Mean Spatial RMSE},
        every axis plot/.append style={line width=2pt, mark size=3.5pt},
        legend style={at={(0.7,0.88)},anchor=center},
        legend cell align={left}]
    \addplot[mark=o,color=tolblue] table [x=nsT, y=merrsEnEMF, col sep=comma] {data/ndbananaresults.csv};
    \addlegendentry{EnEMF-G ($s_{\mathcal{E}}$ = 0.4)};

    \addplot[mark=star, color=tolred, mark options={solid}] table [x=nsT, y=merrsEnEMFUKF, col sep=comma] {data/ndbananaresults.csv};
    \addlegendentry{EnEMF-U ($s_{\mathcal{E}}$ = 0.5)};

    \addplot [color=tolyellow,mark=+, mark options={solid}] table [x=nsT, y=merrsEnGMF, col sep=comma] {data/ndbananaresults.csv};
    \addlegendentry{EnGMF};

    \addplot [color=tolpurple,mark=triangle, mark options={solid}] table [x=nsT, y=merrsEnKF, col sep=comma] {data/ndbananaresults.csv};
    \addlegendentry{EnKF};

    
    
    \end{axis}
    \end{tikzpicture}
    \caption{A comparison of the EnEMF with the Gaussian weight update (EnEMF-G), with the unscented weight update (EnEMF-U), the EnGMF, and the linearized EnKF for the $n$-dimensional banana problem. The $x$ axis represented the dimension $n$ and the $y$ axis represents the RMSE.}
    \label{fig:ndbanana}
\end{figure}
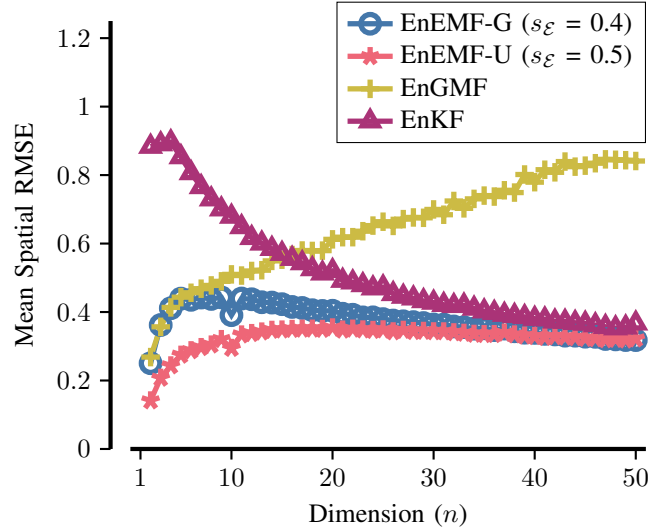

We again make use of the $n$-dimensional banana problem~\cref{sec:banana}. 
For the error metric, we make use of the spatial root-mean-squared error,
\begin{equation}\label{eq:spatial-RMSE}
    \operatorname{RMSE}(\*x^+) = \sqrt{\frac{1}{\lvert\*x\rvert}\sum_{i} \norm{\bar{\*x}^+_{i} - \*x^{\text{true}}_{i}}_2^2},
\end{equation}
where the $\bar{\*x}^+$ is the collection of posterior mean estimates, $\*x^{\text{true}}$ is the collection of true states, and $\lvert\*x\rvert$ is the cardinality of the data.
Starting with fixed number $N = 100$ of iid samples from the prior Gaussian distribution, with limited knowledge of the covariance and mean, the spatial RMSE~\cref{eq:spatial-RMSE} is computed over $500$ different realizations of samples for dimensions $n$ in the range $[1, \dots 50]$. 
We compare the efficacy of the linearized ensemble Kalman filter (EnKF)~\cite{evensen1994sequential,burgers1998analysis}, specifically the linearized Jacobian variant described in~\cite{michaelson2023ensemble}, the EnGMF, the EnEMF with the Gaussian approximation to the weights~\cref{eq:EnEMF-Gaussian-weight-approximation} represented by EnEMF-G, and the EnEMF with the unscented weight approximation~\cref{eq:EnEMF-unscented-weights-approximation} represented by EnEMF-U.
As the EnKF makes use of only the first two statistical moments of the ensemble, and provides an almost linear update, it is a useful baseline for highly non-linear non-Gaussian sequential filtering problems.
The results can be seen in~\cref{fig:ndbanana}.

In general the EnKF error decreases as the dimension of the problem grows larger, meaning that the problem is more and more ameanable to linear filtering algorithms. The EnGMF is worse than the EnKF at about $n=16$, and the error continues to increase with the size of the problem. Both EnEMF algorithms, however, are always strictly better than the EnKF, even for dimensions as high as $n=50$, meaning that the algorithms do not suffer from the curse of dimensionality in the same way as the EnGMF does.

There is a noticeable dip for $n=10$ for the EnEMF algorithms, which is not the result of numerical error, as this dip is consistently present even with significantly more Monte-Carlo samples, thus there is some special set of circumstances that makes $n=10$ unique.

All three mixture model filters use the EKF for the Gaussian sum update instead of the BRUF, as this has been found to be the more numerically stable version of the algorithm for this particular problem.

\subsection{Lorenz '96}

The goal of the second numerical experiment is to show that the EnEMF has the potential to be a superior filter to that of the EnGMF in the high-dimensional setting. We thus make use of the $40$-variable Lorenz '96 equations,
\begin{equation}
    x_k' = -x_{k-1}(x_{k-2} - x_{k+1}) - x_k + F\dots,\quad k = 1,\dots, 40,
\end{equation}
where by the cyclic boundary conditions, $x_0 = x_{40}$, $x_{-1} = x_{39}$, and $x_{41} = x_{1}$. The forcing is set to $F=8$ to have a chaotic system with a Kaplan-Yorke dimension of $27.1$ and $13$ positive Lyapunov exponents~\cite{popov2019bayesian}.

For the non-linear measurement operator we take a magnitude measurement of adjacent variables, 
\begin{equation}
    \left[h(x)\right]_i = \sqrt{x_{2i+1}^2 + x_{2i + 2}^2},\quad i = 1, \dots, 20,
\end{equation}
with an error covariance of $\*R = \frac{1}{4}\*I_{20}$.

For a $40$ variable system the efficiency of the Gaussian kernel is about $0.6\%$, meaning that for a density estimate with $N=100$ samples with the Epanechnikov kernel~\cref{eq:Epanechnikov-distribution}, a sample size of $N=14,\!484$ would be needed with the Gaussian kernel~\cref{eq:Gaussian-kernel}, under an ideal scenario.

In order to account for the fact that the ensemble covariance~\cref{eq:covariance-approximation} is not the true covariance~\cref{eq:true-covariance}, we make use of a standard covariance tapering technique in the data assimilation literature, B-localization~\cite{asch2016data}. For all the filters we take a Gaussian decorrelation function with a localization radius of $r=4$. More details about this technique can be found in~\cite{asch2016data,popov2019bayesian}.
In addition, the EnGMF and both EnEMF algorithms make use of the BRUF with $M=5$ for the Gaussian sum update.

As the EnEMF weight update is not exact, we test the two different choices of the weight calculation with hand-tuned scaling parameters, namely EnEMF-G with $s_{\mathcal{E}} = 0.15$ and EnEMF-U with $s_{\mathcal{E}} = 2.5$.
We additionally test against the EnGMF, and to the EnKF 
to which an inflation~\cite{popov2020explicit} factor of $\alpha_{\text{inf}} = 1.01$ is applied for stability.

We run all algorithms for $192$ independent Monte Carlo simulations (this choice made to be divisible by the 12 physical cores of the machine this experiment was performed on) with $\Delta t = 0.2$ time units between measurements, corresponding roughly to a day in model time. The simulations are run for $2200$ measurements, discarding the first $200$ to account for spinup. The spatio-temporal root mean squared error (RMSE) is used as the error metric for determining the `best' algorithm for the given problem, taking~\cref{eq:spatial-RMSE} and extending to all temporal realizations of the variables.
The algorithms are all run for a varying degree of particle number $N$ in the range of $100$ to $500$.

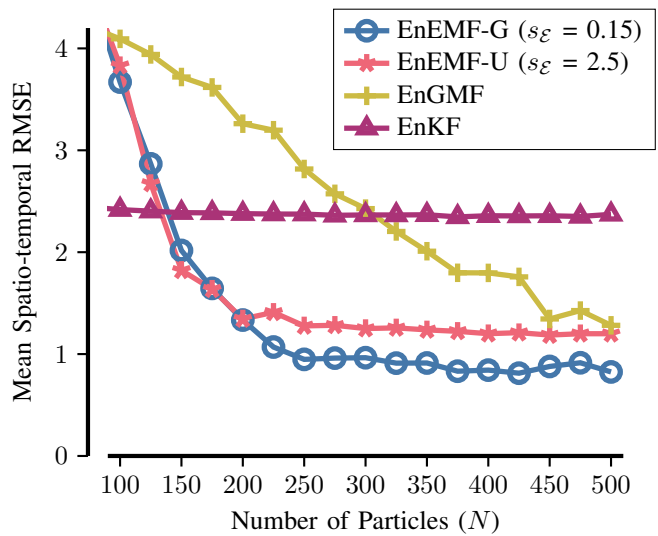
\begin{figure}[t]
    \centering
    \begin{tikzpicture}
    \begin{axis}[clean,
        cycle list name=tol,
        xtick={100, 150, 200, 250, 300, 350, 400, 450, 500},
        table/col sep=comma,
        xmin = 90,
        xmax = 510,
        ymin = 0,
        ymax = 4.2,
        clip = true,
        xlabel = {Number of Particles ($N$)},
        ylabel = {Mean Spatio-temporal RMSE},
        every axis plot/.append style={line width=2pt, mark size=3.5pt},
        legend style={at={(0.75,0.88)},anchor=center},
        legend cell align={left}]

    \addplot[mark=o,color=tolblue] table [x=Ns, y=rmseEnEMF, col sep=comma] {data/L96datajournal.csv};
    \addlegendentry{EnEMF-G ($s_{\mathcal{E}}$ = 0.15)};







    \addplot [color=tolred,mark=star, mark options={solid}] table [x=Ns, y=rmseEnEMFUKF, col sep=comma] {data/L96datajournal.csv};
    \addlegendentry{EnEMF-U ($s_{\mathcal{E}}$ = 2.5)};

    
    \addplot[color=tolyellow, mark=+] table [x=Ns, y=rmseEnGMF, col sep=comma] {data/L96datajournal.csv};
    \addlegendentry{EnGMF};

    \addplot [color=tolpurple,mark=triangle, mark options={solid}] table [x=Ns, y=rmseEnKF, col sep=comma] {data/L96datajournal.csv};
    \addlegendentry{EnKF};
    
    \end{axis}
    \end{tikzpicture}
    \caption{Number of particles ($N$) versus spatio-temporal RMSE for the Lorenz '96 problem for the EnEMF (both with $s_{\mathcal{E}} = 1$ and $s_{\mathcal{E}} = 1/2$), EnGMF, and the EnKF}
    \label{fig:lorenz96-experiment}
\end{figure}

The results of the experiment are provided in \cref{fig:lorenz96-experiment}.
As can be seen, the linearized EnKF, which is an (almost) linear filter is at an error plateau for all tested particle numbers. The EnEMF-U reaches a significantly lower error plateau for $N=200$ particles, and the EnEMF-G reaches the lowest error at $N=250$ particles. The EnGMF takes $N=500$ particles to barely reach the error of the EnEMG-U. More importantly, both EnEMF algorithms take $N=150$ particles to beat the EnKF while the EnGMF takes $N=325$, which is more than double. Given more optimal resampling,  a better Gaussian sum algorithm, and a better method to calculate the EnEMF weights it is possible that the EnEMF can reach an even smaller amount of required particles, though this is yet to be confirmed.

\section{Conclusions}
\label{sec:conclusions}

In this work we provide a theoretical derivation of the ensemble Epanechnikov mixture filter, and shown that it should outperform its cousin, the ensemble Gaussian mixture filter in the high-dimensional setting. 
We derive a practical implementation of the EnEMF that leverages the computational machinery of the EnGMF, such that the EnEMF can be implemented without significant computational overhead.

We additionally show through a numerical experiment that the EnEMF---while not attaining the theoretical error leaps over the EnGMF---still requires half the particles for the same level of error for a 40-variable problem.
These results provide a promising path forward for making use of the Epanechnikov kernel in high-dimensional particle filtering applications.

Future work will focus on better resampling techniques, a more optimal Gaussian sum update and of course, a more optimal weighting scheme.

\section*{Acknowledgment}
This work is based on research that is in part sponsored by the Air Force Office of Scientific Research (AFOSR) under grant FA9550-23-1-0646

\bibliographystyle{IEEEtran}
\bibliography{bibsmall, bibfiles/banana,bibfiles/covarianceshrinkage,bibfiles/em,bibfiles/engmf,bibfiles/filteringgeneral,bibfiles/kernelapproximation,bibfiles/misc,bibfiles/multifidelity, bibfiles/enkf}

\begin{thebibliography}{10}
\providecommand{\url}[1]{#1}
\csname url@samestyle\endcsname
\providecommand{\newblock}{\relax}
\providecommand{\bibinfo}[2]{#2}
\providecommand{\BIBentrySTDinterwordspacing}{\spaceskip=0pt\relax}
\providecommand{\BIBentryALTinterwordstretchfactor}{4}
\providecommand{\BIBentryALTinterwordspacing}{\spaceskip=\fontdimen2\font plus
\BIBentryALTinterwordstretchfactor\fontdimen3\font minus
  \fontdimen4\font\relax}
\providecommand{\BIBforeignlanguage}[2]{{%
\expandafter\ifx\csname l@#1\endcsname\relax
\typeout{** WARNING: IEEEtran.bst: No hyphenation pattern has been}%
\typeout{** loaded for the language `#1'. Using the pattern for}%
\typeout{** the default language instead.}%
\else
\language=\csname l@#1\endcsname
\fi
#2}}
\providecommand{\BIBdecl}{\relax}
\BIBdecl

\bibitem{bar2004estimation}
Y.~Bar-Shalom, X.~R. Li, and T.~Kirubarajan, \emph{Estimation with applications
  to tracking and navigation: theory algorithms and software}.\hskip 1em plus
  0.5em minus 0.4em\relax Ney York, NY, USA: Wiley, 2004.

\bibitem{asch2016data}
M.~Asch, M.~Bocquet, and M.~Nodet, \emph{Data assimilation: methods,
  algorithms, and applications}.\hskip 1em plus 0.5em minus 0.4em\relax SIAM,
  2016.

\bibitem{reich2015probabilistic}
S.~Reich and C.~Cotter, \emph{Probabilistic forecasting and Bayesian data
  assimilation}.\hskip 1em plus 0.5em minus 0.4em\relax Cambridge University
  Press, 2015.

\bibitem{sorenson1971recursive}
H.~W. Sorenson and D.~L. Alspach, ``Recursive bayesian estimation using
  gaussian sums,'' \emph{Automatica}, vol.~7, no.~4, pp. 465--479, 1971.

\bibitem{terejanu2008uncertainty}
G.~Terejanu, P.~Singla, T.~Singh, and P.~D. Scott, ``Uncertainty propagation
  for nonlinear dynamic systems using gaussian mixture models,'' \emph{Journal
  of guidance, control, and dynamics}, vol.~31, no.~6, pp. 1623--1633, 2008.

\bibitem{faubel2009split}
F.~Faubel, J.~McDonough, and D.~Klakow, ``The split and merge unscented
  gaussian mixture filter,'' \emph{IEEE Signal Processing Letters}, vol.~16,
  no.~9, pp. 786--789, 2009.

\bibitem{gustafsson2002particle}
F.~Gustafsson, F.~Gunnarsson, N.~Bergman, U.~Forssell, J.~Jansson, R.~Karlsson,
  and P.-J. Nordlund, ``Particle filters for positioning, navigation, and
  tracking,'' \emph{IEEE Transactions on signal processing}, vol.~50, no.~2,
  pp. 425--437, 2002.

\bibitem{daum2016gromov}
F.~Daum, J.~Huang, and A.~Noushin, ``Gromov's method for {B}ayesian stochastic
  particle flow: A simple exact formula for q,'' in \emph{2016 IEEE
  International Conference on Multisensor Fusion and Integration for
  Intelligent Systems (MFI)}.\hskip 1em plus 0.5em minus 0.4em\relax IEEE,
  2016, pp. 540--545.

\bibitem{van2019particle}
P.~J. Van~Leeuwen, H.~R. K{\"u}nsch, L.~Nerger, R.~Potthast, and S.~Reich,
  ``Particle filters for high-dimensional geoscience applications: A review,''
  \emph{Quarterly Journal of the Royal Meteorological Society}, vol. 145, no.
  723, pp. 2335--2365, 2019.

\bibitem{anderson1999monte}
J.~L. Anderson and S.~L. Anderson, ``A {Monte} {Carlo} implementation of the
  nonlinear filtering problem to produce ensemble assimilations and
  forecasts,'' \emph{Monthly weather review}, vol. 127, no.~12, pp. 2741--2758,
  1999.

\bibitem{liu2016efficient}
B.~Liu, B.~Ait-El-Fquih, and I.~Hoteit, ``Efficient kernel-based ensemble
  {Gaussian} mixture filtering,'' \emph{Monthly Weather Review}, vol. 144,
  no.~2, pp. 781--800, 2016.

\bibitem{popov2022adaptive}
A.~A. Popov and R.~Zanetti, ``An adaptive covariance parameterization technique
  for the ensemble {Gaussian} mixture filter,'' \emph{arXiv preprint
  arXiv:2212.10323}, 2022.

\bibitem{popov2023elengmf2}
------, ``Ensemble-localized kernel density estimation with applications to the
  ensemble gaussian mixture filter,'' \emph{arXiv preprint arXiv:2308.14143},
  2023.

\bibitem{patel1996handbook}
J.~K. Patel and C.~B. Read, \emph{Handbook of the normal distribution}.\hskip
  1em plus 0.5em minus 0.4em\relax CRC Press, 1996, vol. 150.

\bibitem{popov2024fusion}
A.~A. Popov and R.~Zanetti, ``Are non-gaussian kernels suitable for ensemble
  mixture model filtering?'' in \emph{2023 26th International Conference on
  Information Fusion (FUSION)}, Jul. 2024.

\bibitem{jaynes2003probability}
E.~T. Jaynes, \emph{Probability theory: The logic of science}.\hskip 1em plus
  0.5em minus 0.4em\relax Cambridge, UK: Cambridge university press, 2003.

\bibitem{silverman2018density}
B.~W. Silverman, \emph{Density estimation for statistics and data
  analysis}.\hskip 1em plus 0.5em minus 0.4em\relax London, UK: Chapman\&Hall,
  1986.

\bibitem{baker1999integration}
J.~A. Baker, ``Integration of radial functions,'' \emph{Mathematics Magazine},
  vol.~72, no.~5, pp. 392--395, 1999.

\bibitem{deheuvels1977estimation}
P.~Deheuvels, ``Estimation non param{\'e}trique de la densit{\'e} par
  histogrammes g{\'e}n{\'e}ralis{\'e}s (ii),'' in \emph{Annales de l'ISUP},
  vol.~22, 1977, pp. 1--23.

\bibitem{epanechnikov1969non}
V.~A. Epanechnikov, ``Non-parametric estimation of a multivariate probability
  density,'' \emph{Theory of Probability \& Its Applications}, vol.~14, no.~1,
  pp. 153--158, 1969.

\bibitem{L1book}
L.~Devroye and L.~Gy{\"o}rfi, \emph{Nonparametric Density Estimation: the $L_1$
  view}.\hskip 1em plus 0.5em minus 0.4em\relax Wiley, 1985.

\bibitem{michaelson2023ensemble}
K.~Michaelson, A.~A. Popov, and R.~Zanetti, ``Ensemble {K}alman filter with
  {B}ayesian recursive update,'' in \emph{2023 26th International Conference on
  Information Fusion (FUSION)}.\hskip 1em plus 0.5em minus 0.4em\relax IEEE,
  2023, pp. 1--6.

\bibitem{michaelson2023recursive}
------, ``Recursive update filtering: A new approach,'' in \emph{Proceedings of
  the 33rd AAS/AIAA Space Flight Mechanics Meeting}, 2023.

\bibitem{durant2024you}
D.~Durant, A.~A. Popov, and R.~Zanetti, ``What are you weighting for? improved
  weights for gaussian mixture filtering with application to cislunar orbit
  determination,'' \emph{arXiv preprint arXiv:2405.11081}, 2024.

\bibitem{sarkka2023bayesian}
S.~S{\"a}rkk{\"a} and L.~Svensson, \emph{Bayesian filtering and
  smoothing}.\hskip 1em plus 0.5em minus 0.4em\relax Cambridge university
  press, 2023, vol.~17.

\bibitem{evensen1994sequential}
G.~Evensen, ``Sequential data assimilation with a nonlinear quasi-geostrophic
  model using monte carlo methods to forecast error statistics,'' \emph{Journal
  of Geophysical Research: Oceans}, vol.~99, no.~C5, pp. 10\,143--10\,162,
  1994.

\bibitem{burgers1998analysis}
G.~Burgers, P.~J. Van~Leeuwen, and G.~Evensen, ``Analysis scheme in the
  ensemble kalman filter,'' \emph{Monthly weather review}, vol. 126, no.~6, pp.
  1719--1724, 1998.

\bibitem{popov2019bayesian}
A.~A. Popov and A.~Sandu, ``A bayesian approach to multivariate adaptive
  localization in ensemble-based data assimilation with time-dependent
  extensions,'' \emph{Nonlinear Processes in Geophysics}, vol.~26, no.~2, pp.
  109--122, 2019.

\bibitem{popov2020explicit}
------, ``An explicit probabilistic derivation of inflation in a scalar
  ensemble kalman filter for finite step, finite ensemble convergence,''
  \emph{arXiv preprint arXiv:2003.13162}, 2020.

\end{thebibliography}

\begin{IEEEbiography}[{\includegraphics[width=1in,height=1.25in,clip,keepaspectratio]{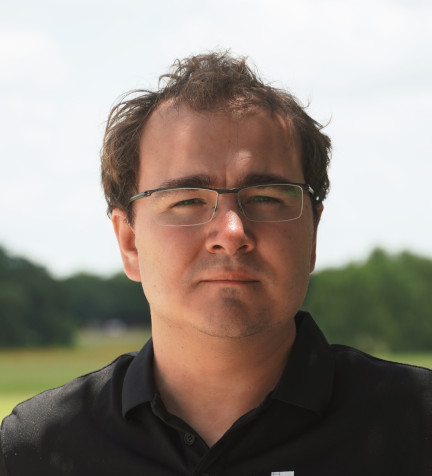}}]{Andrey A. Popov} received his B.S. in Mathematics from RPI, Troy, NY, and his Ph.D. in Computer Science from Virginia Tech, Blacksburg, VA. He is currently a postdoc at the Oden Institute at the University of Texas at Austin, Austin, TX.
His research interests include state estimation/data assimilation, theory-guided machine learning, and data-driven methods for both modeling and quantifying uncertainty in dynamical systems.
\end{IEEEbiography}

\begin{IEEEbiography}[{\includegraphics[width=1in,height=1.25in,clip,keepaspectratio]{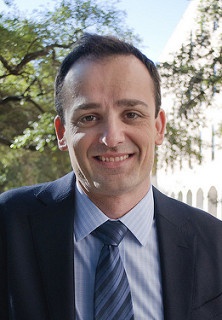}}]{Renato Zanetti} (Senior Member, IEEE) is an Associate Professor of Aerospace Engineering at The University of Texas at Austin. Prior to joining UT he worked as an engineer at the NASA Johnson Space Center and at Draper Laboratory. Renato’s research interests include nonlinear estimation, onboard navigation, and autonomous aerospace vehicles.

\end{IEEEbiography}

\end{document}